\documentclass[10pt]{article} 

\usepackage{multirow}
\usepackage{microtype}
\usepackage{graphicx}
\usepackage{wrapfig}
\usepackage{amsthm}
\usepackage[preprint]{tmlr}


\usepackage{amsmath,amsfonts,bm}









\def\eqref#1{equation~\ref{#1}}









\def\1{\bm{1}}










\DeclareMathAlphabet{\mathsfit}{\encodingdefault}{\sfdefault}{m}{sl}
\SetMathAlphabet{\mathsfit}{bold}{\encodingdefault}{\sfdefault}{bx}{n}













\DeclareMathOperator*{\argmin}{arg\,min}

\usepackage{hyperref}
\usepackage{url}
\usepackage{xcolor}
\usepackage[export]{adjustbox}
\usepackage{tikz}
\usepackage{subcaption}
\usepackage{comment}
\usepackage{booktabs}
\usepackage{enumitem}

\usepackage{titletoc}

\theoremstyle{plain}
\newtheorem{theorem}{Theorem}[section]
\newtheorem{proposition}[theorem]{Proposition}

\theoremstyle{definition}

\theoremstyle{remark}

\title{Learning predictive checklists \\ with probabilistic logic programming}


\author{\name Yukti Makhija \email yukti.makhija@alumni.iitd.ac.in \\
      \addr Indian Institute of Technology, Delhi 
      \AND
      \name Edward De Brouwer \email edward.debrouwer@gmail.com \\
      \addr ESAT-STADIUS, KU Leuven, Belgium
      \AND
      \name Rahul G. Krishnan \email rahulgk@cs.toronto.edu\\
      \addr  University of Toronto\\
      Vector Institute}


\newcommand{\method}{ProbChecklist}


\begin{document}

\maketitle

\begin{abstract}
Checklists have been widely recognized as effective tools for completing complex tasks in a systematic manner. Although originally intended for use in procedural tasks, their interpretability and ease of use have led to their adoption for predictive tasks as well, including in clinical settings. However, designing checklists can be challenging, often requiring expert knowledge and manual rule design based on available data. Recent work has attempted to address this issue by using machine learning to automatically generate predictive checklists from data, although these approaches have been limited to Boolean data. We propose a novel method for learning predictive checklists from diverse data modalities, such as images and time series. 

Our approach relies on probabilistic logic programming, a learning paradigm that enables matching the discrete nature of checklist with continuous-valued data. We propose a regularization technique to tradeoff between the information captured in discrete concepts of continuous data and permit a tunable level of interpretability for the learned checklist concepts. We demonstrate that our method outperforms various explainable machine learning techniques on prediction tasks involving image sequences, time series, and clinical notes.
\end{abstract}

\section{Introduction}

In recent years, machine learning models have gained popularity in the healthcare domain due to their impressive performance in various medical tasks, including diagnosis from medical images and early prediction of sepsis from clinical time series, among others \citep{davenport2019potential,esteva2019guide}. Despite the proliferation of these models in the literature, their wide adoption in real-world clinical practice remains challenging \citep{futoma2020myth,ahmad2018interpretable,ghassemi2020review,de2022predicting}. Ensuring the level of robustness required for healthcare applications is difficult for deep learning models due to their inherent black box nature. Non-interpretable models make stress testing arduous and thus undermine the confidence required to deploy them in critical applications such as clinical practice. To address this issue, recent works have focused on developing novel architectures that are both human-interpretable and retain the high performance of black box models \citep{ahmad2018interpretable}.

One such approach is learning medical checklists from available medical records. \textcolor{black}{A checklist is a discrete linear classifier consisting of binary conditions that predict an outcome. For instance, it can be used to diagnose a disease by determining if at least T out of M symptoms are present in a patient.} Due to their simplicity and ability to assist clinicians in complex situations, checklists have become increasingly popular in medical practice~\citep{haynes2009surgical}. However, the simplicity of using checklists typically contrasts with the complexity of their design process. Creating a performant checklist requires domain experts who manually collect evidence about the particular clinical problem of interest, and subsequently reach consensus on meaningful checklist rules~\citep{hales2008development}. As the number of available medical records grows, the manual collection of evidence becomes more tedious, bringing the need for partially automated design of medical checklists.

Recent works have taken a step in that direction by learning predictive checklists from Boolean, categorical, or continuous tabular data~\citep{zhang2021learning,makhija2022learning}. Nevertheless, many available clinical data, such as images or time series, are neither categorical nor tabular by nature. They therefore fall outside the limits of applicability of previous approaches for learning checklists from data. This work aims at building checklists based on the presence or absence of concepts learnt from high-dimensional data, thereby addressing this limitation. 

Prior work leverages integer programming to generate checklists, but the discrete (combinatorial) nature of solving integer programs makes it challenging to learn predictive checklists from images or time series data. Deep learning architectures rely on gradient-based optimization which differs in style and is difficult to reconcile with integer programming \citep{shiela_etal-aaai21}. We instead propose to formulate predictive checklists within the framework of probabilistic logic programming. This enables extracting binary concepts from high-dimensional modalities like images, time series, and text data according to a probabilistic checklist objective, while propagating derivatives throughout the entire neural network architecture. As a proof-of-concept, we show that our architecture can be leveraged to learn a checklist of decision trees operating on concepts learnt from the input data, highlighting the capacity of our approach to handle discrete learning structures.

\begin{wrapfigure}{r}{0.4\textwidth}
  \begin{center}
    \includegraphics[width=0.38\textwidth]{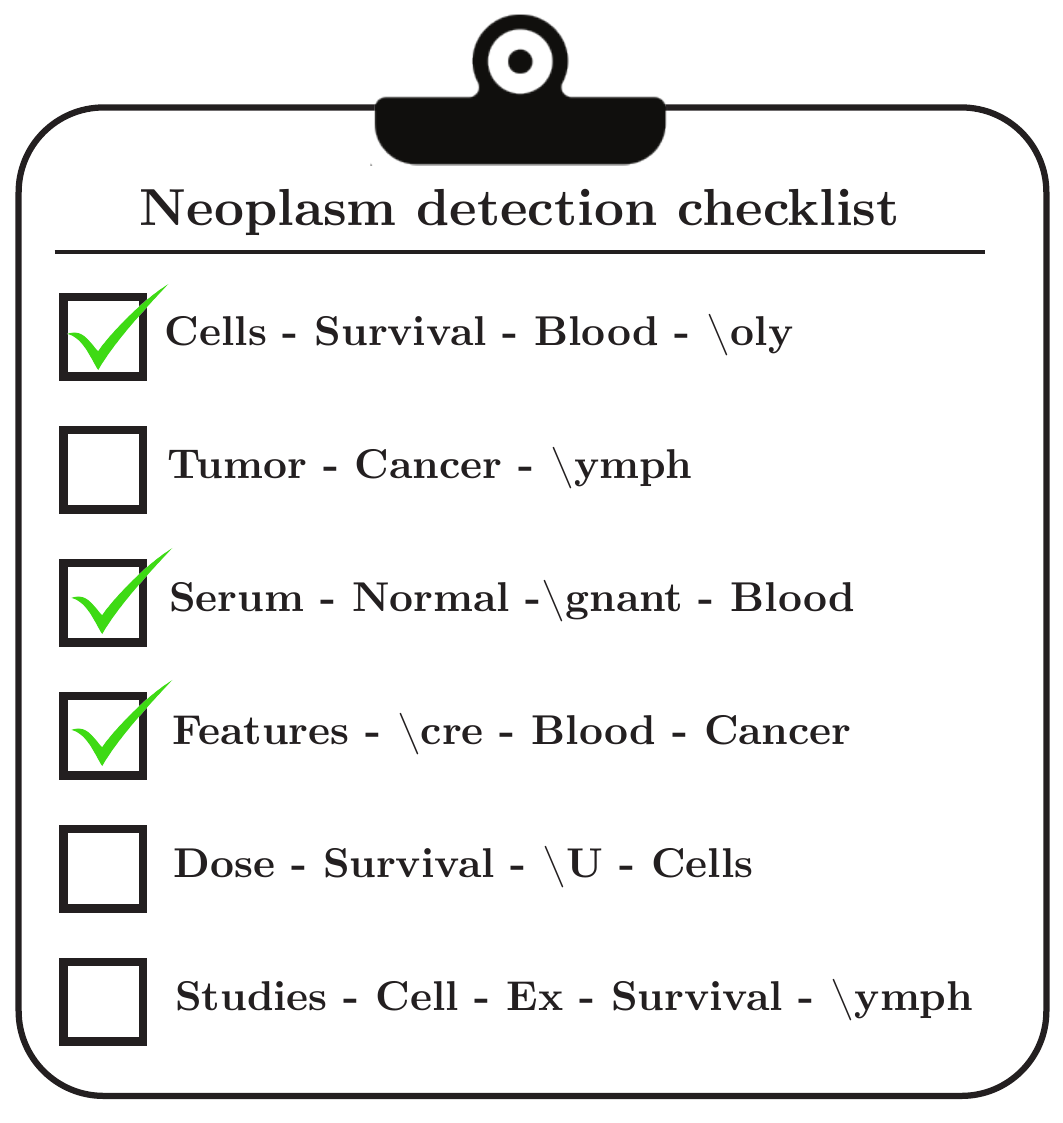}
  \end{center}
  \caption{Example checklist learnt by our architecture. Three or more checks entail a positive neoplasm prediction. \textcolor{black}{We identify key tokens in clinical notes that correspond to positive and negative concepts, where each concept is characterized by the presence of positive and absence of negative tokens.}}
  \label{fig:neoplasm_checklist}
  \vspace{-1 \baselineskip}
\end{wrapfigure}

Our architecture, \method, operates by learning binary concepts from high-dimensional inputs, which are then used for evaluating the checklist. This is unlike existing approaches that rely on summary statistics of the input data (e.g. mean or standard deviation of time series). Nevertheless, this flexibility also reports parts of the burden of interpretability to the learnt concepts. Indeed, depending on how the concepts are learnt, their meaning can become unclear. We therefore investigate two strategies for providing predictive and interpretable concepts. The first relies on using inherently interpretable concept extractors, which only focus on specific aspects of the input data~\citep{683989_finale}. The second adds regularization penalties to enforce interpretability in the neural network by design. Several regularization terms have been coined to ensure the concepts are unique, generalizable, and correspond to distinctive input features \citep{jeffares2023tangos} \citep{Zhang_2018_CVPR_loss}. 

Clinical practice is a highly stressful environment where complex decisions with far-reaching consequences have to be made quickly. In this context, the simplicity, robustness, and effectiveness of checklists can make a difference \citep{checklists_medicine_oxford}. Furthermore, healthcare datasets contain sensitive patient information, including ethnicity and gender, which should not cause substantial differences in the treatment provided. However, machine learning models trained on clinical data have been shown to exhibit unacceptable imbalance of performance for different population groups, resulting in biased predictions. When allocating scarce medical resources, fairness should be emphasized more than accuracy to avoid targeting minority subgroups \citep{fairness_covid_example}.
In an attempt to mitigate this problem, we study the impact of including a fairness regularization into our architecture and report significant reductions in the performance gap across sensitive populations.
We validate our approach empirically on several classification tasks using various data modalities such as images and clinical time series. We show that \method~outperforms previous learnable predictive checklist approaches as well as interpretable machine learning baselines. We showcase the capabilities of our method on two healthcare case studies: early prediction of sepsis and mortality in the intensive care unit.

\textbf{Contributions.}
\begin{itemize}
\item We propose the first framework to learn predictive checklists from arbitrary input data modalities. Our approach can learn checklists and extract meaningful concepts from time series and images, among others. In contrast with previous works that used (mixed-)integer programming, our approach formulates the predictive checklist learning within the framework of probabilistic logical programming. This makes \method~more amenable to modern deep learning optimization methods.
\item We investigate the impact of different schemes for improving the interpretability of the concepts learnt as the basis of the checklist. We employ regularization techniques to encourage the concepts to be distinct, so they can span the entire input vector and be specialized, i.e. ignore the noise in the signal and learn sparse representations. We also investigate the impact of incorporating fairness constraints into our architecture.
\item We validate our framework on different data modalities such as images, text and time series, displaying significantly improved performance compared to state-of-the-art checklist learning methods.
\end{itemize}

\section{Related works}

A major motivation for our work is the ability to learn effective yet interpretable predictive models from data, as exemplified by the interpretable machine learning literature. Conceptually, our method builds upon the recent body of work on learning predictive checklists from data. The implementation of our solution is directly inspired by the literature on probabilistic logic programming.

\paragraph{Interpretable machine learning.}

Motivated by the lack of robustness and trust of black box models, a significant effort has been dedicated to developing more human-interpretable machine learning models in the last years \citep{ahmad2018interpretable,murdoch2019definitions}. Among them, one distinguishes between \emph{intrinsic} (\emph{i.e.} when the model is itself interpretable such as decision trees) and \emph{posthoc} (\emph{i.e.} when trained models are interpreted a posteriori) methods \citep{du2019techniques}. Checklists belong to the former category as they are an intuitive and easy to use decision support tool. Compared to decision trees, checklists are more concise (there is no branching structure) and can thus be potentially more effective in high stress environments (a more detailed argument is presented in Appendix \ref{app:decision_tree_vs_checklist}). Our approach also relies on building concepts from the input data. Because the concepts are learnt from data, they may themselves lack a clear interpretation. Both intrinsic and posthoc interpretability techniques can then be applied for the concept extraction pipeline \citep{jeffares2023tangos}. Concept Bottleneck Models \citep{concept_bottleneck_models_icml20} insert a concept layer before the last fully connected layer, assigning a human-understandable concepts to each neuron. However, a major limitation is that it requires expensive annotated data for predefined concepts.

\paragraph{Rule-based learning.}

Boolean rule mining and decision rule set learning is a well-studied area that has garnered considerable attention spurred by the demand for interpretable models. Some examples of logic-based models include Disjunctive Normal Forms (OR of ANDs), Conjunctive Normal Forms (AND of ORs), chaining of rules in the form of IF-THEN-ELSE conditions in decision lists, and decision tables. Most approaches perform pre-mining of candidate rules and sample rules using integer programs (IP), simulated annealing, performing local search algorithm for optimizing simplicity and accuracy \citep{interpretable_decision_set_kdd16}, and Bayesian framework for constructing a maximum a posteriori (MAP) solution \citep{bayesian_learning_rules_jmlr_rudin}.

\paragraph{Checklist learning.}

Checklists, pivotal in clinical decision-making, are typically manually designed by expert clinicians \citep{haynes2009surgical}. Increasing medical records make manual evidence collection tedious, prompting the need for automated medical checklist design.
Recent works have taken a step in that direction by learning predictive checklists from Boolean or categorical medical data~\citep{zhang2021learning}. \citet{makhija2022learning} have extended this approach by allowing for continuous tabular data using mixed integer programming. Our work builds upon these recent advances but allows for complex input data modalities. What is more, in contrast to previous works, our method does not rely on integer programming and thus exhibits much faster computing times and is more amenable to the most recent deep learning stochastic optimization schemes.

\paragraph{Probabilistic logical programming.}

Probabilistic logic reasoning combines logic and probability theory. It represents a refreshing framework from deep learning in the path towards artificial intelligence, focusing on high-level reasoning. Examples of areas relying on these premises include statistical artificial intelligence \citep{raedt2016statistical,koller2007introduction} and probabilistic logic programming \citep{de2015probabilistic}. More recently, researchers have proposed hybrid architectures, embedding both deep learning and logical reasoning components \citep{santoro2017simple,rocktaschel2017end,manhaeve2018deepproblog}. Probabilistic logic reasoning has been identified as important component for explainable or interpretable machine learning, due to its ability to incorporate knowledge graphs \citep{arrieta2020explainable}. Combination of deep learning and logic reasoning programming have been implemented in interpretable computer vision tasks, among others \citep{bennetot2019towards,object_detection_problog}. \textcolor{black}{Methods like DeepProbLog \cite{manhaeve2018deepproblog} offer an innovative approach for learning interpretable, discrete structures, such as checklists and decision trees. We study specific instantiations of these methods that are driven by real-world applications.}

\section{Background}
\label{sec:background}

\textbf{Problem statement: }
We consider a supervised learning problem where we have access to $N$ input data points $\mathbf{x}_i \in \mathcal{X}$ and corresponding binary labels $y_i \in \{0,1\}$. Each input data point consists of a collection of $K$ data modalities: $\mathbf{x}_i = \{\mathbf{x}_i^1,\mathbf{x}_i^2,\ldots,\mathbf{x}_i^K \}$. Each data modality can either be continuous ($\mathbf{x}_i^k \in \mathbb{R}^{d_k}$) or binary ($\mathbf{x}_i^k \in \{0,1\}^{d_k}$). Categorical data are assumed to be represented in expanded binary format. We set $d$ as the overall dimension of $\mathbf{x}_i$. That is, $d = \sum_{k=1}^K d_k$. The $N$ input data points and labels are aggregated in a data structure $\mathbf{X}$ and a vector $\mathbf{y}$ respectively.

Our objective is to learn an interpretable decision function $f : \mathcal{X} \rightarrow \{0,1\}$ from some hypothesis class $\mathcal{F}$ that minimizes some \textcolor{black}{error criterion $d_{err}$} between the predicted and the true label. The optimal function $f^*$ then is:
$f^* = \argmin_{f\in \mathcal{F}} \mathbb{E}_{\mathbf{x},\mathbf{y} \sim \mathcal{D}} [d_{err}(f(\mathbf{x}),\mathbf{y})],$
where $\mathcal{D}$ stands for the observational data distribution. We limit the search space of decision functions $\mathcal{F}$ to the set of predictive checklists, which are defined below.

\textbf{Predictive checklists: }
Generally, we define a predictive checklist as a linear classifier applying on a list of $M$ binary concepts $\mathbf{c}_i \in \{0,1\}^M$. A checklist will predict a data point, represented by $M$ concepts $\mathbf{c}_i = \{ c_i^1,\ldots,c_i^M\}$, as positive if the number of concepts with $c_i^m=1$ is larger or equal to a threshold $T$. That is, given a data point with concepts $\mathbf{c}_i$, the predicted label of a checklist with threshold $T$ is expressed as:
\begin{equation}
\hat{y}_i = \begin{cases}
 1 \quad \text{if} \quad \sum_{m=1}^M c_i^m \geq T \\
 0 \quad \text{otherwise}
\end{cases}
\label{eq:checklist}
\end{equation}
The only parameter of a checklist is the threshold $T$. Nevertheless, the complexity lies in the definition of the list of concepts that will be given as input to the checklist. This step can be defined as mapping $\psi$ that produces the binary concepts from the input data: 
 $\mathbf{c}_i = \psi(\mathbf{x}_i).$
Existing approaches for learning checklists from data differ by their mapping $\psi$. \citet{zhang2021learning} assume that the input data is already binary. In this case, the mapping is a binary matrix $\Psi_M \in \{0,1\}^{M \times k}$ such that $\Psi_M \mathbf{1}_k = \textcolor{black}{\mathbf{1}_M}$, where $\mathbf{1}_k$ is a column vector of ones\footnote{This corresponds effectively to every row of $\Psi$ summing to 1.}. One then computes $\mathbf{c}_i$ as $\mathbf{c}_i = \Psi_M \mathbf{x}_i$. The element of $\Psi_M$ as well as the number of concepts $M$ (hence the dimension of the matrix) are learnable parameters.

Previous approaches \citep{makhija2022learning} relax the binary input data assumption by allowing for the creation of binary concepts from continuous data through thresholding. Writing $\mathbf{x}_i^b$ and $\mathbf{x}_i^c$ for the binary and real parts of the input data respectively, the concept creation mechanism transforms the real data to binary with thresholding and then uses the same matrix $\Psi_M$. We have $\mathbf{c}_i = \Psi_M [\mathbf{x}_i^b, \mbox{sign}(\mathbf{x}_i^c - \mathbf{t}_i)]$, where $[\cdot, \cdot]$ is the concatenation operator, $\mathbf{t}_i$ is a vector of thresholds, $\mbox{sign}(\cdot)$ is an element-wise function that returns $1$ is the element is positive and $0$ otherwise. In this formulation one learns the number of concepts $M$, the binary matrix $\Psi_M$ as well as the thresholds values $\mathbf{t}_i$.

\textbf{Probabilistic logic programming: }
Probabilistic logical reasoning is a knowledge representation approach that involves the use of probabilities to encode uncertainty in knowledge. This is encoded in a probabilistic logical program (PLP) $\mathcal{P}$ connected by a set of $N$ probabilistic facts $U = \{U_1,...,U_N\}$ and $M$ logical rules $F = \{f_1,...f_M\}$.

PLP enables inference on knowledge graphs $\mathcal{P}$ by calculating the probability of a query. 
 
This query is executed by summing over the probabilities of different "worlds" $w = {u_1,...,u_N}$ (i.e., individual realizations of the set of probabilistic facts) that are compatible with the query $q$. The probability of a query $q$ in a program $\mathcal{P}$ can be inferred as $P_{\mathcal{P}}(q) = \sum_w P(w) \cdot \mathbb{I}[F(w) \equiv q]$, where $F(w) \equiv q$ indicates that the propagation of the realization $w$ across the knowledge graph, according to the logical rules $F$, leads to $q$ being true. The motivation behind using a PLP is to navigate the tradeoff between discrete checklists and learnable soft concepts. 

Incorporating a neural network into this framework enables the generation of probabilistic facts denoted as the neural predicate $U^\theta$, where $\theta$ represents the weights. These weights can be trained to minimize a loss that depends on the probability of a query $q$: $\hat{\theta} = \argmin_\theta \mathcal{L}(P(q\mid \theta))$. \textcolor{black}{At its core, PLP provides a paradigm to manipulate probabilistic concepts according to logical rules. Considering a checklist as a specific logical rule, and concepts as probabilistic facts, probabilistic logic programming appears as a natural framework for learning checklist.}\textcolor{black}{We provide more details and examples of PLP in Appendix \ref{app:problog}.}

\section{\method : learning fair and interpretable predictive checklists}
\label{sec:methods}
\subsection{Architecture overview}
Our method first applies concept extractors, $\psi$, on each data modality. Each concept extractor outputs a list of concept probabilities for each data modality. These probabilities are then concatenated to form a vector of probabilistic concepts ($\mathbf{p_i}$) for a given data sample. This vector is dispatched to a probabilistic logic module that implements a probabilistic checklist with query $q := \mathbf{\mathcal{P}(y_i = \hat{y_i})}$. We can then compute the probability of the label of each data sample and backpropagate through the whole architecture. At inference time, the checklist inference engines discretize the probabilistic checklist to provide a complete predictive checklist. A graphical depiction of the overall architecture is given in Figure~\ref{fig:architecture}.

\begin{figure}
    \centering
    \small
    \includegraphics[width=0.65\linewidth]{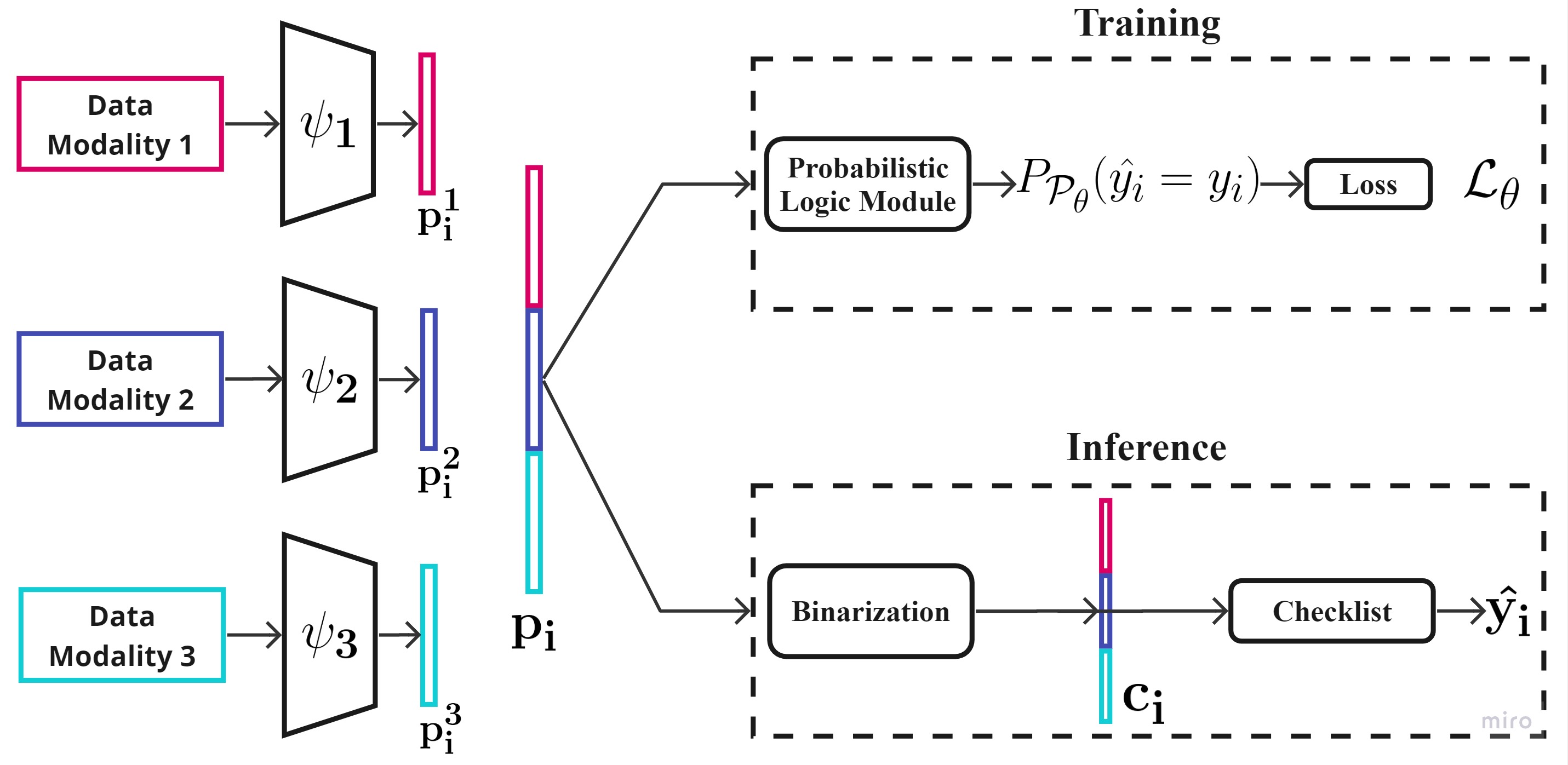}
    \caption{\textbf{Overview of our proposed \method.} Given $K$ data modalities as the input for sample $i$, we train $K$ concept learners to obtain the vector of probabilistic concepts  of each modality $\mathbf{p_i^k} \in [0,1]^{d'_k}$. Next, we concatenate into the full concepts probabilities ($\mathbf{p_i}$) for sample i. For training the concept learners, we pass $\mathbf{p_i}$ through the probabilistic logic module. At inference time, we discretize $\mathbf{p_i}$ through the thresholding parameter $\tau$ to obtain binary concepts $\mathbf{c_i}$, which are used to construct a complete predictive checklist.}
    \label{fig:architecture}
\end{figure}

\subsection{Data modalities and concepts}
Data modalities refer to the distinct sets of data that characterize specific facets of a given process. For instance, in the context of healthcare, a patient profile typically includes different clinical time series, FMRI and CT-scan images, as well as prescriptions and treatment details in text format. The division in data modalities is not rigid but reflects some underlying expert knowledge. Concepts are characteristic binary variables that are learnt separately for each modality. 
\subsection{Concept extractor}
\label{sec:tangos_method}
Instead of directly learning binary concepts, we extract soft concepts that we subsequently discretize. For each of the $K$ data modalities, we have a soft concept extractor $\psi_k : \mathbb{R}^{d_k} \rightarrow [0,1]^{d'_k}$ that maps the input data to a vector of probabilities $\mathbf{p}_i^k$, where $d'_k$ is the number of soft concepts to be extracted from data modality $k$. \textcolor{black}{If $\psi_k$ represents a neural network that is used to encode the $k^{th}$ modality, then $d'_k$ is the dimension of the output layer}.  Concatenating the outputs of the $K$ concept extractors results in a vector of probabilities $\mathbf{p}_i \in [0,1]^{d'}$, with the $d'$ the total number of soft concepts \textcolor{black}{($d' = \sum_{k=1}^K d'_k$)}.

\subsubsection{Interpretability of the concept extractor}
\label{sec:interpretability_concept_extractor}
The concepts extractors $\psi_k$ are typically implemented with deep neural networks and are therefore not interpretable in general. To address this issue, we propose two mechanisms to improve the interpretability of the learnt concepts: (1) \emph{focused} concept learners and (2) regularization terms that incorporate explainability in the structure of the concept learners.

\paragraph{Focused concept extractor} Focused models limit the range of features that can contribute to a concept. This increases the interpretability of each concept by explicitly narrowing down the set of features that can contribute to a given concept. Examples of focused models include LSTMs that only take specific features of the time series as input~\citep{683989_finale}, CNNs that only take a specific part of the image as input, or decision trees that only use a subset of the input features.
\paragraph{Interpretability regularization}
Another approach for achieving interpretability is via regularization. For instance, TANGOS favors concepts that are obtained from distinct and sparse subsets of the input vector, avoiding overlap~\citep{jeffares2023tangos}. As in the focused concept extractor, it allows narrowing down the set of input features that can impact a given concept. Sparsity is achieved by taking the L1-norm of the concept gradient attributions with respect to the input vector. Decorrelation between different concepts is achieved by penalizing the inner product of the gradient attributions for all pairs of concepts. More details about TANGOS and its mathematical formulation can be found in Appendix~\ref{app:tangos}.

\subsection{Checklist learning}
The checklist prediction formula in Equation~\ref{eq:checklist} can be understood as a set of logical rules in a probabilistic logical program. Together with the probabilities of each concepts (\emph{i.e.,} $d'$ probabilistic facts encoded in the vector $\mathbf{p}_i$) this represents a probabilistic logical program $\mathcal{P}_{\theta}$. We refer to $\theta$ as the set of learnable parameters in the probabilistic logical program.

We want to maximize the probability of a the prediction being correct. That is, we want to maximize the probability of the query $q:= \hat{y}_i = y_i$,
\begin{align}
\hat{\theta} &= \argmin_{\theta} - P_{\mathcal{P}_{\theta}}(\hat{y}_i = y_i) 
= \argmin_{\theta} - \sum_w P(w) \cdot \mathbb{I}[F(w)\equiv (\hat{y}_i = y_i)]
\end{align}

By interpreting the probabilities $\mathbf{p}_i$ as the probability that the corresponding binary concepts are equal to $1$ (\emph{i.e.} $\mathbf{p}_i[j]  = P(\mathbf{c}_i[j]=1)$, where $[j]$ indexes the $j$-th component of the vector), we can write the probability of query $q$ as follows.

\begin{proposition} The probability of the query $\hat{y}_i = y_i$ in the predictive checklist is given by
\begin{align}
    P_{\mathcal{P}_{\theta}}(\hat{y}_i = 1) = 1 - P_{\mathcal{P}_{\theta}}(\hat{y}_i = 0) = \sum_{d=T}^{d'} \sum_{\sigma \in \Sigma^d} \prod_{j=1}^{d'}(\mathbf{p}_i[j])^{\sigma(j)} (1-\mathbf{p}_i[j])^{1-\sigma(j)}
    \label{eq:propositon}
\end{align}
where $\Sigma_d$ is the set of selection functions $\sigma : [d'] \rightarrow \{0,1\}$ such that $\sum_{j=1}^{d'}\sigma(j) = d$.
\end{proposition}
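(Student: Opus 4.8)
The plan is to instantiate the general PLP query formula $P_{\mathcal{P}_\theta}(q) = \sum_w P(w)\cdot \mathbb{I}[F(w)\equiv q]$ from the Background section with $q := (\hat{y}_i = 1)$, and to identify explicitly what the worlds, their probabilities, and the indicator become in the case of the checklist program. A world $w$ here is a joint realization of the $d'$ probabilistic facts, i.e., a concrete assignment of the binary concepts $\mathbf{c}_i \in \{0,1\}^{d'}$. I would encode each such assignment by a selection function $\sigma : [d'] \to \{0,1\}$ with $\sigma(j) = \mathbf{c}_i[j]$, so that the set of all worlds is in bijection with $\{0,1\}^{d'}$.

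First I would compute $P(w)$ for a fixed world $\sigma$. Treating the concepts as mutually independent Bernoulli variables with $P(\mathbf{c}_i[j] = 1) = \mathbf{p}_i[j]$ — which is exactly the interpretation of the probabilistic facts stated just before the proposition — the probability of the world factorizes as $P(w) = \prod_{j=1}^{d'} (\mathbf{p}_i[j])^{\sigma(j)}(1-\mathbf{p}_i[j])^{1-\sigma(j)}$, since $\sigma(j)\in\{0,1\}$ selects the correct factor in each coordinate. Next I would evaluate the indicator: by the checklist rule in Equation~\ref{eq:checklist}, propagating the world $\sigma$ through the logical rules $F$ yields $\hat{y}_i = 1$ precisely when $\sum_{j=1}^{d'} \sigma(j) \geq T$, so $\mathbb{I}[F(w)\equiv(\hat{y}_i=1)] = 1$ iff the number of active concepts in $\sigma$ is at least $T$.

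Substituting these two computations into the query formula leaves a single sum over exactly those worlds with at least $T$ ones. Grouping the surviving worlds by their exact number of active concepts $d = \sum_{j}\sigma(j)$, which ranges over $T, T+1, \ldots, d'$, splits this sum into the double sum $\sum_{d=T}^{d'}\sum_{\sigma\in\Sigma^d}$, where $\Sigma^d$ is the set of selection functions with $\sum_j \sigma(j) = d$; this reproduces the claimed right-hand side. The identity $P(\hat{y}_i=1) = 1 - P(\hat{y}_i=0)$ is then immediate, since $\hat{y}_i$ is binary and the two events partition the world space.

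The argument is essentially bookkeeping, so there is no deep obstacle; the one point requiring care is justifying that the world probabilities factorize, i.e., that the distinct concepts are treated as independent probabilistic facts. This is built into the PLP semantics but should be stated explicitly, since the product form — and in particular the clean separation into counts $d$ — relies on it. A secondary point is to confirm that the worlds with fewer than $T$ active concepts are precisely those contributing to $\hat{y}_i = 0$, so that passing from the event $\{\sum_j \sigma(j)\geq T\}$ to its complement neither double-counts nor omits any world.
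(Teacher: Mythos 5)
Your proposal is correct and follows essentially the same route as the paper: the paper likewise factorizes each world's probability as $\prod_j (\mathbf{p}_i[j])^{\sigma(j)}(1-\mathbf{p}_i[j])^{1-\sigma(j)}$, identifies the event $\hat{y}_i=1$ with worlds having at least $T$ active concepts, and sums over the exact count $d=T,\ldots,d'$ (the paper phrases the intermediate step as the distribution of the number of true concepts, which is really a Poisson--binomial law since the $\mathbf{p}_i[j]$ differ). Your version is marginally more careful in making the independence of the probabilistic facts explicit, which the paper leaves implicit in the PLP semantics, but the decomposition and bookkeeping are the same.
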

\begin{proof}\textcolor{black}{
Given the individual probabilities of each concept $j$ being positive, $\mathbf{p}_i[j]$, the probability that the checklist outputs exactly $d$ positive concepts for sample $i$ is given by the binomial distribution:}
\begin{align*}
    P_{\mathcal{P}_{\theta}}(\text{\# true concepts} = d) = \sum_{\sigma \in \Sigma^d} \prod_{j=1}^{d'}(\mathbf{p}_i[j])^{\sigma(j)} (1-\mathbf{p}_i[j])^{1-\sigma(j)}, 
\end{align*}
\textcolor{black}{
The probability that the checklist gives a positive prediction is the probability that the number of positive concepts is larger than $T$, we thus get Equation \ref{eq:propositon}. The detailed derivations are presented in Appendix~\ref{app:loss_derivations}.}

\end{proof}
We use the log-likelihood as the loss function, which leads to our final loss: $\mathcal{L} = y_i \log(P_{\mathcal{P}_{\theta}}(\hat{y}_i = 1)) + (1-y_i)  \log(P_{\mathcal{P}_{\theta}}(\hat{y}_i = 0))$. The parameters $\theta$, include multiple elements: the parameters of the different soft concept extractors ($\theta_{\psi}$), the number of concepts to be extracted for each data modality $d'_k$, and the checklist threshold $T$. As the soft concept extractors are typically parameterized by neural networks, optimizing $\mathcal{L}$ with respect to $\theta_{\psi}$ can be achieved via gradient based methods. $d'_k$ and $T$ are constrained to be integers and are thus treated as hyper-parameters in our experiments.

\subsection{Checklist inference}
\label{sec:checklist_construction}
\method~relies on soft concepts extraction for each data modality. Yet, at test time, a checklist operates on binary concepts. We thus binarize the predicted soft concepts by setting $\mathbf{c}_i[j] = \mathbb{I}[\mathbf{p}_i[j]>\tau]$. The thresholding parameter $\tau$ is an hyperparameter that can be tuned based on validation data. After training, we construct the final checklist by pruning the concepts that are never used in the training data (\emph{i.e.} concepts $j$ such that $\mathbf{c}_i[j]=0, \forall i$, are pruned). Tuning $\tau$ enables navigating the trade-off between sensitivity and specificity depending on the application.

\subsection{Fairness regularization}
\label{sec:methods_fairness}
We encourage fairness of the learnt checklists by equalizing the error rate across subgroups of protected variables. This is achieved by penalizing significant differences in False Positive (FPR) and False Negative Rates (FNR) for sensitive subgroups \citep{10.1145/3494672}.   
For a binary classification problem with protected attribute $S$, predicted labels $\hat{y} \in \{0,1\}$, and actual label $y \in \{0,1\}$, we define the differences as follows \citep{corbettdavies2018measure}:
\begin{align}
    \Delta FPR &= \lVert P(\hat{y_i}=1| y=0, S=s_i) - P(\hat{y_i}=1| y=0, S=s_j) \rVert_1 \ \ \forall s_i,s_j \in S \\
    \Delta FNR &= \lVert P(\hat{y_i}=0| y=1, S=s_i) - P(\hat{y_i}=0| y=1, S=s_j) \rVert_1 \ \ \forall s_i,s_j \in S
\end{align}
and combine these in a fairness regularizer $\mathcal{L}_{\text{Fair}} = \lambda (\Delta FPR + \Delta FNR)$.

\subsection{Learning checklist of decision trees}
The probabilistic programming framework enables learning discrete structures, such as checklists, using gradients. Ultimately, it also allows using discrete structures as the concept extractor, like decision trees, that are more interpretable than deep learning. This leads to a combined architecture: a checklist of decision trees. 

A decision tree can be represented as a logical rule. Each node is assigned a learnt concept and branches depending on whether that concept is true or false. For a tree with $L$ layers, we write $\mathbf{c}_{i}[j,k]$ for the learnt concept assigned to the node at layer $j$ and position $k$, for data sample $i$. If $\mathbf{c}_{i}[j,k] = 1$, the node branches to node $[j+1, 2k]$ and to $[j+1, 1+(k-1)*2]$ otherwise. For a tree with $L=2$ layers and $\mathbf{p}_i[j,k] = P(c_i[j,k] = 1)$, the probability that the output of the decision tree is positive can be written as:
\begin{equation}
    \mathcal{P}_{\text{tree}} = (1-\mathbf{p}_i[1,1])(\mathbf{p}_i[2,1]) +  \textcolor{black}{\mathbf{p}_i[1,1] (\mathbf{p}_i[2,2])},
\end{equation}

This logical rule can then be combined with a checklist to form a checklist of decision trees, where the output of each tree is used as a concept in the checklist. However, the concepts used in the decision rules of the decision trees ($\mathbf{c}_i$) still have to be learned from the data (through concepts extractors $\psi$). Simple concept extractors (\emph{e.g.,} sigmoid function on the input data), or more complex (\emph{e.g.,} CNN, LSTMs) can be used to balance the trade-off between interpretability and expressivity. This is in contrast with classical decision tress that can only operate on the original representation of the data (\emph{e.g.,} a classical decision tree on an image would result in pixel-wise rules, which are likely ineffective).

While this probabilistic program is sufficient for learning decision trees, it falls short of enforcing certain desirable tree characteristics, such as being a \emph{balanced tree}. Indeed, balanced trees are favored for their ability to provide a faithful representation of data, fostering diversity and interpretability of the learned concepts $\mathbf{c}_i$. We thus propose three regularization terms designed to facilitate the learning of balanced trees. These terms are grounded in the simple intuition that balanced trees contain distinct concepts at each node and these concepts split the samples evenly, thereby maximizing entropy. Theses regularizations are exposed in detail in Appendix~\ref{app:trees}.

Lastly, we note that checklists of trees include simple decision trees. Indeed, a checklist of trees with a single tree, leading to a checklist with a single concept and $M=1$, is equivalent to a simple decision tree.

\section{Experiments}
We investigate the performance of \method~along multiple axes. We first compare the classification performance against a range of interpretable machine learning baselines. Second, we show how we can tune the interpretability of the learnt concepts and how we can enforce fairness constraints into \method. Lastly, we demonstrate how our approach can be used to learn a checklist of decision tress. Complete details about the datasets, baselines used in our experiments, and hyperparameter tuning are in Appendix \ref{app:experiments} and \ref{app:hyperparameter_tuning}.

\paragraph{Baselines.} We compare our method against the following baselines.

\emph{Mixed Integer Programming (MIP)}\citep{makhija2022learning}. This approach allows to  learn predictive checklists from continuous inputs. For images or time series, we typically apply MIP on top of an embedding obtained from a pre-trained deep learning model.

\emph{Integer Linear Program (ILP)}\citep{zhang2021learning}. ILP learns predictive checklists with Boolean inputs. We apply these to tabular data by categorizing the data using feature means as threshold.

\emph{CNN/LSTM/BERT + Logistic Regression (LR).} This consists in using a CNN, LSTM or BERT on the input data followed by logistic regression on the combination of the last layer's embeddings of each modality.

\emph{CNN/LSTM/BERT + Multilayer perceptron (MLP).} This is similar to the previous approach but where we apply an MLP on the combination of the last layer's embeddings of each modality.

\paragraph{Datasets.} A crucial strength of our method resides in its ability to learn predictive from high dimensional input data. We briefly describe the MNIST synthetic dataset created here and defer the descriptions of other datasets (PhysioNet sepsis tabular dataset, MIMIC mortality time series dataset, Medical Abstracts TC Corpus for Neoplasm Detection) to the Appendix \ref{app:datasets}

\emph{Synthetic MNIST checklist.} Due to the absence of real-world datasets with ground-truth checklists, we first validate our idea on a synthetic setup created using MNIST image sequences as input and a checklist defined on digit labels. Each sample consists of a sequence of $\mathbf{K}=4$ MNIST images (treating each image as a separate modality). We then assign a label to \textit{each samples} according to the following ground-truth checklist. 
(i) Digit of $\mathbf{Image\, 1} \in\{0,2,4,6,8\}$,
(ii) $\mathbf{Image\, 2} \in\{1,3,5,7,9\}$,
(iii) $\mathbf{Image\, 3} \in\{4,5,6\}$,
(iv) $\mathbf{Image\, 4} \in\{6,7,8,9\}$.
If at least 3 of the rules are satisfied, the label is $1$, and $0$ otherwise.

\emph{PhysioNet sepsis tabular dataset.} We use the PhysioNet 2019 Early Sepsis Prediction time series dataset \citep{reyna2019early}. We transformed this dataset into tabular data by using basic summary extraction functions such as the mean, standard deviation, and last entry of the clinical time series of each patient.

\emph{MIMIC mortality dataset.} We use the the clinical timseries data from the MIMIC III Clinical Database \cite{mimic} to perform mortality prediction on the data collected at the ICU. We use hourly data of vital signs and laboratory test collected over twenty-four hours.

\emph{Medical Abstracts TC Corpus.} 
We work with the clinical notes dataset designed for multi-class disease classification \citep{medical_abstracts_corpus}. However, we only focus on neoplasm detection. We use medical abstracts which describe the conditions of patients. Each note is 5-7 sentences long on average.

\subsection{Checklist performance}
We evaluate the classification performance of the different models according to accuracy, precision, recall and specificity. For the checklist baselines, we also report the total number of concepts used ($M$) and the threshold for calling a positive sample ($T$). Results are presented in table \ref{tab:all_checklist_pred}. Additional results and details about hyperparameter tuning are provided in Appendix \ref{app:experiments} and \ref{app:hyperparameter_tuning}.
{
\setlength\belowcaptionskip{-10pt}
\begin{table}[h]
    \centering
    \begin{adjustbox}{width=\linewidth}
    \begin{tabular}{llccccccc}
    \toprule
    \textbf{Dataset} &\textbf{Model} & \textbf{Accuracy} &  \textbf{Precision} & \textbf{Recall} & \textbf{Specificity} & $\mathbf{d'_k}$ &\textbf{T} & \textbf{M} \\
    \midrule
    \multirow{4}*{MNIST Checklist} & CNN + MLP$^{\#}$  & 94.72 ± 4.32  & 0.895 ± 0.1 & 0.835 ± 0.13 & 0.976 ± 0.02 & \multirow{3}*{4}& - & -\\
    & CNN + LR$^{\#}$ & 95.04 ± 0.31 & 0.914 ± 0.01 & 0.836 ± 0.016 & \textbf{0.98 ± 0.003} & & - & - \\
    & pretrained CNN + MIP$^{*}$ & 79.56 &  0 & 0 & 1 & & 8 & 13.5 ± 0.5\\
    \cmidrule{2-9}
    & \textbf{\method} & \textbf{96.808 ± 0.24} & \textbf{0.917 ± 0.015} & \textbf{0.929 ± 0.01} & 0.978 ± 0.004 & 4 & 8.4 ± 1.2 & 16 \\  
    
    \midrule

\multirow{6}*{PhysioNet Tabular} & Logistic Regression$^{\#}$ & 62.555 ± 1.648 & 0.624 ± 0.0461 & 0.144 ± 0.0393 & \textbf{0.9395 ± 0.0283} & \multirow{4}*{1} & - & - \\ 

& Unit Weighting$^{*}$ & 58.278 ± 3.580 & 0.521 ± 0.093 & 0.4386 ± 0.297 & 0.6861 ±  0.251 & & 3.2 ± 1.16 & 9.6 ± 0.8 \\ 

& ILP mean thresholds$^{*}$ & 62.992 ± 0.82 & 0.544 ± 0.087 & 0.1196 ± 0.096 & 0.9326 ± 0.0623 & & 2.8 ± 0.748 & 4.4 ± 1.01 \\

& MIP Checklist$^{*}$ & \textbf{63.688 ± 2.437} & 0.563 ± 0.050 & \textbf{0.403 ± 0.082} & 0.7918 ± 0.06 & & 3.6 ± 0.8 & 8 ± 1.095 \\

\cmidrule{2-9}
&\textbf{\method}& 62.579 ± 2.58 & \textbf{0.61 ± 0.076} & 0.345 ± 0.316 & 0.815 ± 0.185S & 1 &  3.6 ± 1.2 & 10\\
\midrule
\multirow{7}*{MIMIC III} & Unit Weighting$^{*}$ & 73.681 ± 0.972 & 0.469 ± 0.091 & 0.223 ± 0.206 & 0.889 ± 0.026 & \multirow{6}*{1} & 6.1 ± 0.830 & 8.9 ± 0.627 \\ 

& ILP mean thresholds$^{*}$ & 75.492 ± 0.318 & 0.545 ± 0.028 & 0.142 ± 0.059 & 0.959 ± 0.019 && 3.6 ± 0.894 & 3.6 ± 0.894 \\ 

& MIP Checklist$^{*}$ & 74.988 ± 0.025 & 0.232 ± 0.288 & 0.014 ± 0.017 & \textbf{0.997 ± 0.004} && 4.5 ± 2.082 & 4.5 ± 2.082 \\

& LSTM + LR$^{\#}$ & 66.585 ± 2.19 & 0.403 ± 0.02 & \textbf{0.684 ± 0.039} & 0.66 ± 0.034 && - & - \\ 
& LSTM + MLP$^{\#}$ & 76.128 ± 0.737	& 0.446 ± 0.223 & 0.23 ± 0.132& 0.939 ± 0.036 && - & - \\ 
& LSTM + MLP$^{\#}$ (all features) & \textbf{80.04 ± 0.598}	& 0.328	± 0.266 & 0.129 ± 0.131  & 0.962 ± 0.043 && - & - \\ 
\cmidrule{2-9}

& \textbf{\method} & 77.58 ± 0.481 & \textbf{0.642 ± 0.075} & 0.247 ± 0.032 & 0.953 ± 0.019 & 2 & 9.6 & 20 \\
\midrule
\multirow{5}*{Medical Abstracts Corpus} & BERT + ILP$^{*}$ & 72.991 ± 8.06 & 0.292 ± 0.29 & 0.197 ± 0.26 & 0.879 ± 0.17 & \multirow{4}*{6} & 1.2 ± 0.4 & 1.2 ± 0.4 \\
& BERT + MIP$^{*}$ & 69.32 ± 8.1 & 0.583 ± 0.14 & 0.059 ± 0.08 & 0.991 ± 0.09 && 2.5 ± 0.6 & 4 ± 0.8 \\
& BERT + LR$^{\#}$ & 80.193 ± 0.88 & 0.790 ±  0.051 &  0.138 ± 0.065 &  \textbf{0.988 ± 0.007} & & - & -\\
& BERT + MLP$^{\#}$ & 81.782 ± 0.31 &\textbf{ 0.941 ± 0.04} & 0.07 ± 0.009 & 0.961 ± 0.01 & & -  & -  \\
\cmidrule{2-9}
& \textbf{ProbChecklist} & \textbf{83.213 ± 0.23} & 0.616 ± 0.006 & \textbf{0.623 ± 0.01} & 0.891 ± 0.003 & 6 & 3 & 6 \\

    \bottomrule
    \end{tabular}
    \end{adjustbox}
    \caption{Performance results for all the models and baselines on all the datasets. We report accuracy, precision, recall as well as conciseness of the learnt checklist. We conduct a comparative analysis of ProbChecklist's performance across various metrics, acknowledging that primary metrics are often task-specific. This aspect broadens the potential applications of our method. To facilitate visualization and comparison, we plot these results in Section \ref{app:table1_plot} of the Appendix (Figure \ref{fig:probchecklist_results_table1}). To aid the readers, we mark the non-interpretable baseline with $\#$ and existing checklist learning methods that operate only on tabular datasets with *.} 
    \label{tab:all_checklist_pred}
\end{table}
}

\textbf{MNIST checklist dataset.} We used a simple three-layered CNN model as the concept learner for each image.  In Table \ref{tab:all_checklist_pred}, we report the results of the baselines and \method \ for $\mathbf{d'_k = 4 \ (M=16)}$ on the test samples. 
Our method outperforms all the baselines, in terms of accuracy and recall, indicating that it identifies the minority class better than these standard approaches. The MIP failed to find solutions for some folds of the dataset and didn't generalise well on the test samples.

\textbf{Sepsis prediction from PhysioNet tabular data.} 
 This setup is ideal for comparison with existing checklist method as they only operate on tabular dataset. In Figure \ref{fig:physionet_tabular_checklist}, we visualize the checklist learnt by ProbChecklist in one of the experiments. We observe that ProbChecklist exhibits similar performance to checklist baselines.  

 \begin{wrapfigure}{r}{0.3\textwidth}
\vspace{-1\baselineskip}
    \centering
\includegraphics[width=0.3\textwidth]{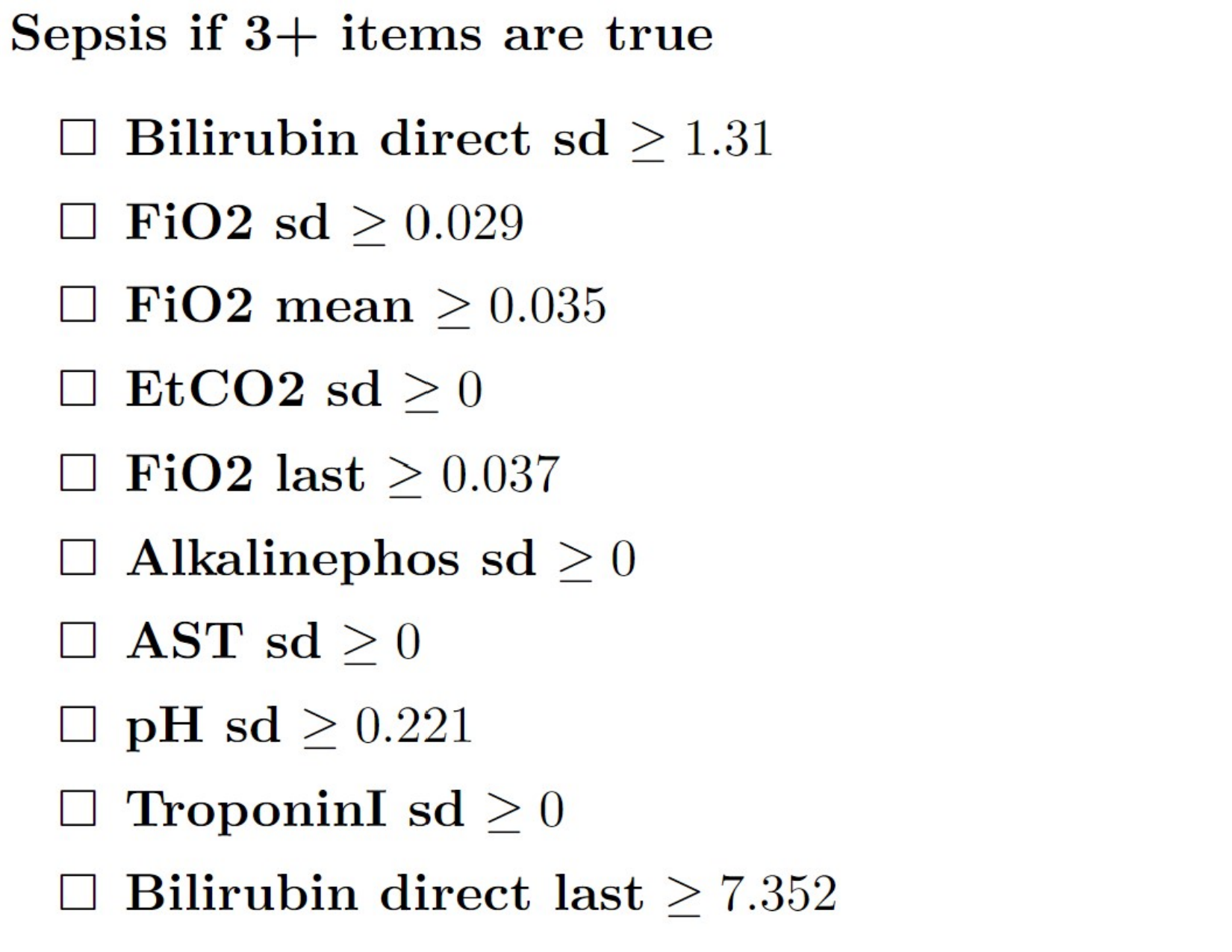}
    \caption{Learnt checklist for PhysioNet Sepsis Prediction Task (Tabular). We report the performance result as accuracy (65.69\%), precision (0.527), recall (0.755), and specificity (0.6).}
\label{fig:physionet_tabular_checklist}
\vspace{-2\baselineskip}
\end{wrapfigure}

\textbf{Mortality prediction using MIMIC mortality time series data.}
To learn concepts from clinical timeseries, we use $K$ two-layered LSTMs to serve as the concept learners. We highlight our key results in Table~\ref{tab:all_checklist_pred}. We surpass existing methods in terms of accuracy and precision by significant margin. We find that a checklist with better recall is learnt by optimizing over F1-Score instead of accuracy.

\textbf{Neoplasm detection from Medical Abstracts TC Corpus.} We use a BERT model pretrained on MIMIC-III clinical notes (BioBERT) \citep{alsentzer-etal-clinical-bert} with frozen weights as our concept learner.
We treat the entire paragraph as a single modality and feed it into BERT to obtain an M-dimensional embedding which represent soft concepts. Our checklist has a much better recall and accuracy than previous methods. Both checklist learning and deep learning methods give poor performance on the minority class.

\textbf{Sensitivity analysis.}
We investigate the evolution of performance of \method~with increasing number of learnt concepts $\mathbf{d'_k}$. On Figure~\ref{fig:sensitivity_mnist}, we show the accuracy, precision, recall, and specificity in function of the number of concepts per image on the MNIST dataset. We observe a significant improvement in performance when $\mathbf{d'_k}$ increases from $\mathbf{1}$ to $\mathbf{2}$, and saturates after $\mathbf{d'_k} = 3$. This suggests that having learning one concept per image is inadequate to capture all the signal in the sample and that $\mathbf{d'_k}$ is an important hyperparameter. We provide results for the sensitivity analysis on the other datasets in Appendix \ref{app:sensitivity_analysis} and additional details on hyperparameter tuning in Appendix \ref{app:hyperparameter_tuning}.

\begin{figure}[h]
    \centering
\begin{subfigure}[c]{0.48\textwidth}
    \centering
    \includegraphics[width=\columnwidth]{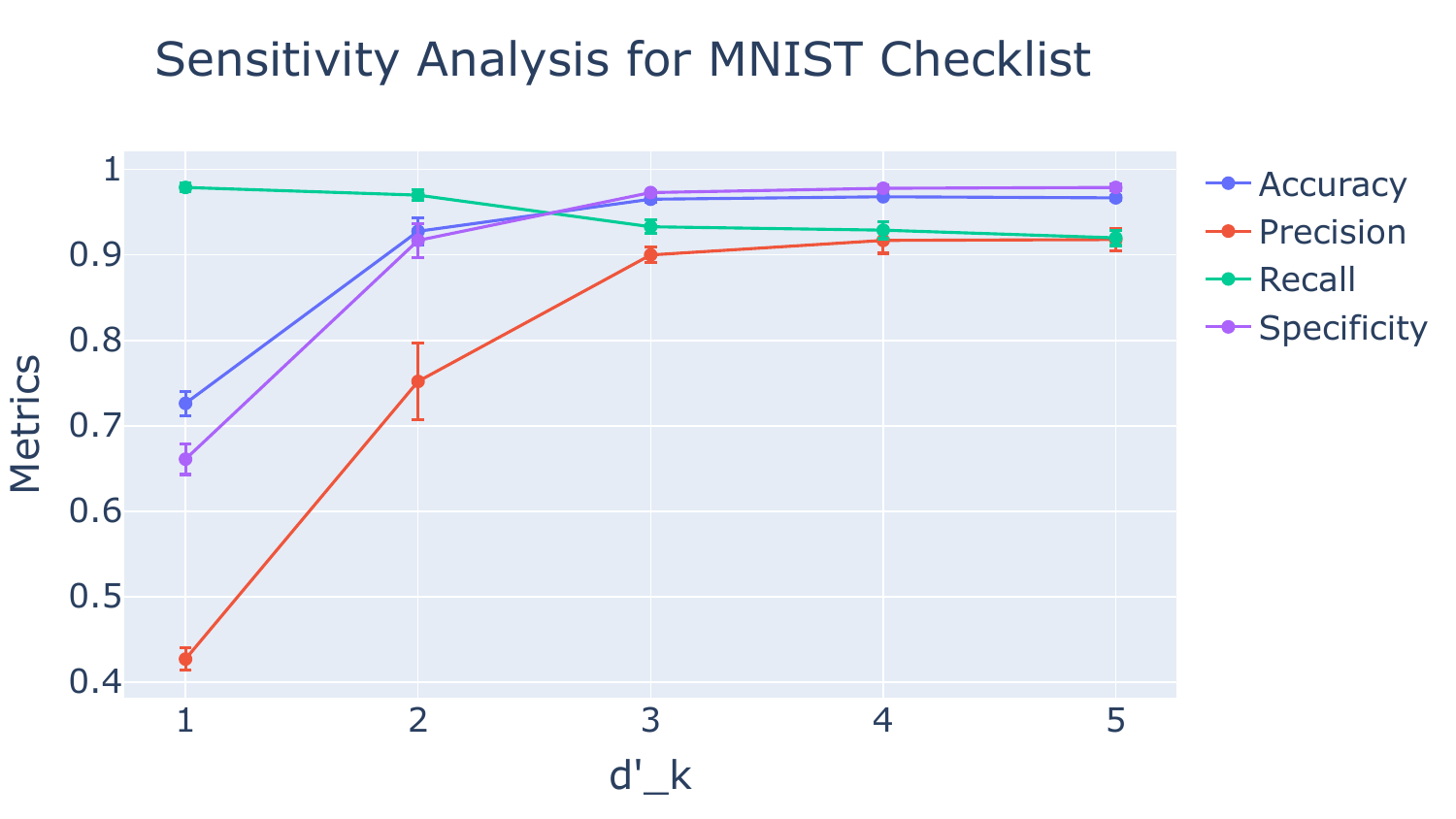}
    \caption{Performance of \method \ with varying $\mathbf{d'_k}$ on MNIST Checklist Dataset}
    \label{fig:sensitivity_mnist}
\end{subfigure}
\hfill
\begin{subfigure}[c]{0.5\textwidth}
    \centering
    \includegraphics[width=0.8\columnwidth]{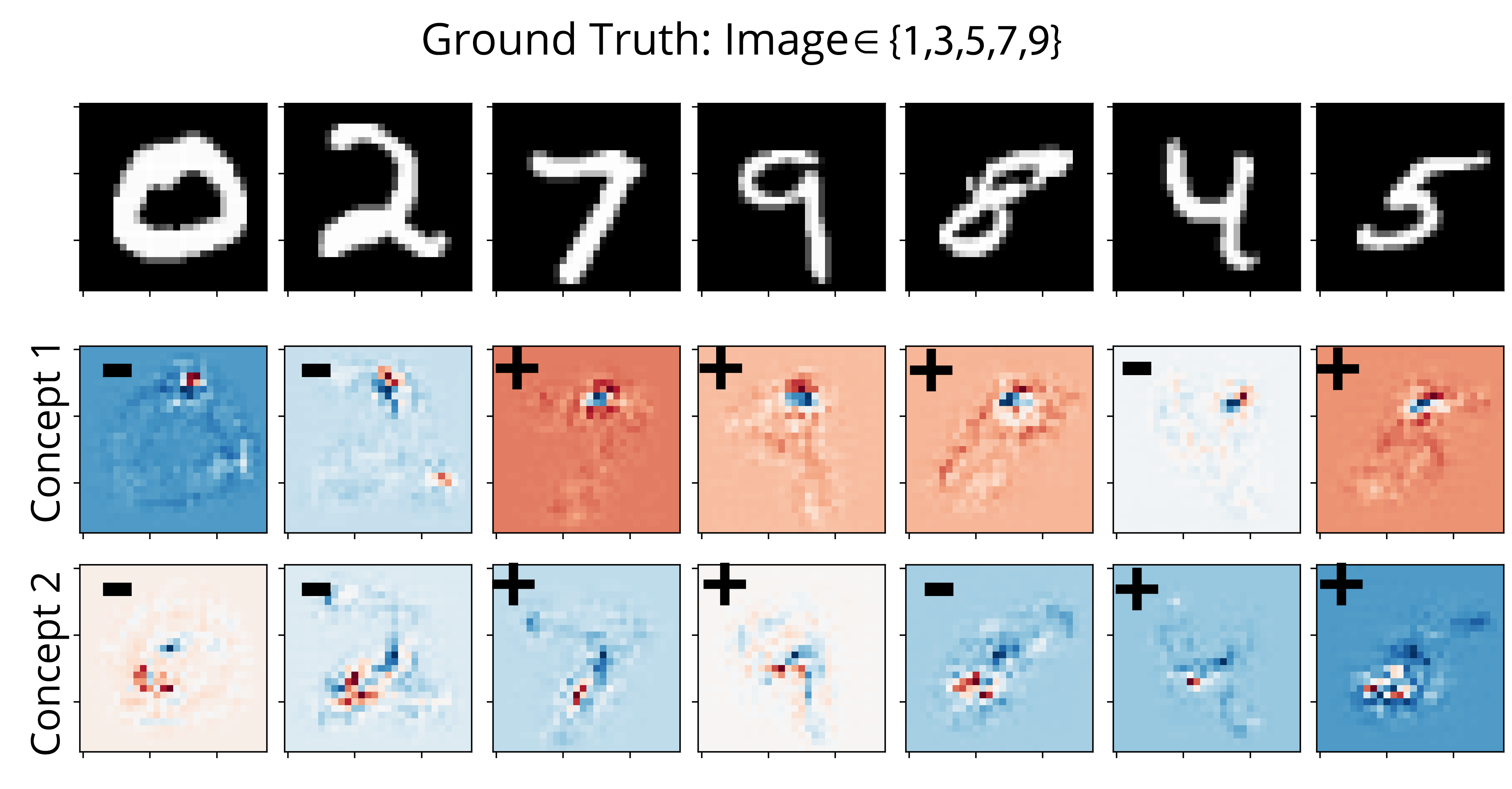}
    \caption{We plot images and corresponding gradient attributions heat maps for seven inputs samples of the Image 2 modality of the MNIST dataset. We used a checklist with two learnable concepts per image. The intensity of red denotes the positive contribution of each pixel, whereas blue indicates the negative. If a concept predicted as true for an image, then we represent that with plus (+) sign, and with a negative sign (-) otherwise.}
    \label{fig:mnist_tangos}
\end{subfigure}
    \caption{Results of ProbChecklist on MNIST Checklist Dataset: (a) Sensitivity Analysis (b) Interpretation of the concepts learnt.}
    \vspace{-1\baselineskip}
\end{figure}

\subsection{Concepts interpretation}
ProbChecklist learns interpretable concepts for tabular data by learning thresholds to binarize continuous features. However, interpreting the learnt concepts for complex modalities such as time series, images, and text is significantly more challenging. We investigate the concepts learnt from image and time series datasets with an interpretability regularization as described in Section~\ref{sec:interpretability_concept_extractor}. To gain insight into what patterns of signals refer to each individual concept, we examine the gradient of each concept with respect to each dimension of the input signal. Intuitively, the interpretability regularization enforces the concepts to focus on a sparse set of features of the input data. 

\textbf{MNIST images.} We analyze the gradient of our checklist on individual pixels of the input images. We use a checklist with two concepts per image. On Figure~\ref{fig:mnist_tangos}, we show example images of the \emph{Image 2} of the MNIST dataset along with the gradient heat map for each learnt concept of the checklist. The ground truth concept for this image is $\mathbf{Image\, 2} \in\{1,3,5,7,9\}$. First, we see that the $7,9,$ and $5$ digits are indeed the only ones for which the predicted concepts of our checklist are positive. Second, we infer from the gradient heat maps that concepts 1 and 2 focus on the image's upper half and centre region, respectively. Concept 1 is true for digits 5, 8, 9 and 7, indicating that it corresponds to a horizontal line or slight curvature in the upper half. Since Digits 0 and 2 have deeper curvature than the other images, and there is no activity in that region in the case of 4, concept 1 is false for them. concept 2 is true for images with a vertical line, including digits 9, 4, 5 and 7. Therefore, concept 2 is false for the remaining digits (0, 2, 8). The checklist outcome matches the ground truth when both concepts are true for a given image. Complementary analyses on MNIST images and MIMIC III time series is provided in Appendix \ref{app:mnist_tangos_concepts} and \ref{app:mimic_tangos_concepts}. This analysis ensures interpretability at the individual sample level. As illustrated in the previous example, recognizing and comprehending these concepts at the dataset level relies on visual inspection.

\textbf{Neoplasm detection from medical abstracts.} Compared to images and time series, interpreting concepts learned from textual data is easier because its building blocks are tokens which are already human understandable. For the Neoplasm detection task, we adopt an alternative method by conducting a token frequency analysis across the entire dataset. This approach has yielded a more lucid checklist shown in Figure \ref{fig:neoplasm_checklist}. We identified key tokens associated with positive and negative concepts (positive and negative tokens). Each concept is defined by the presence of positive words and the absence of negative words.

\subsection{Fairness}

\begin{wrapfigure}{L}{0.55\textwidth}
    \vspace{-\baselineskip}
    \centering
    \begin{adjustbox}{width=0.55\textwidth}
    \begin{tabular}{ccrrrrrrrr}
    \hline
        & \multirow{2}{*}{Method} & \multicolumn{2}{c}{Female-Male} & \multicolumn{2}{c}{White-Black} & \multicolumn{2}{c}{Black-Others} & \multicolumn{2}{c}{White-Others}  \\ 
        &  & $\Delta$FNR & $\Delta$FPR & $\Delta$FNR & $\Delta$FPR & $\Delta$FNR & $\Delta$FPR & $\Delta$FNR & $\Delta$FPR \\ 
        \midrule
        \multirow{3}{*}{ILP mean thresholds} & w/o FC  & 0.038 & 0.011 & 0.029 & 0.026 & 0.152 & 0.018 & 0.182 & 0.045 \\
        & FC  & 0.011 & 0.001 & 0.031 & 0.008 & 0.049 & 0.016 & 0.017 & 0.0007 \\
        & \% $\downarrow$ & 71.053 & 90.909 & -6.897 & 69.231 & 67.763 & 11.111 & 90.659 & 98.444 \\
        \hline
        \multirow{3}{*}{\method} & w/o FR  & 0.127 & 0.311 & 0.04 & 0.22 & 0.02 & 0.273 & 0.02 & 0.053 \\
        & FR & 0.103 & 0.089 & 0.028 & 0.016 & 0.021 & 0.008 & 0.006 & 0.008 \\
        & \% $\downarrow$ & 18.898 & 71.383 & 30.000 & 92.727 & -5.000 & 97.033 & 70.000 & 85.660 \\
        \bottomrule
    \end{tabular}
    \end{adjustbox}
    \caption{Improvement in fairness metrics across gender and ethnicity on MIMIC III for the mortality prediction task after adding fairness regularization. We report $\Delta$FNR and $\Delta$FPR for all pairs of subgroups of sensitive features and the percentage decrease (\% $\downarrow$) wrt unregularized checklist.}
    \label{tab:fairness_mimic}
    \vspace{-\baselineskip}
\end{wrapfigure}

We evaluate the fairness of \method \ on the MIMIC-III Mortality Prediction task and show that we can reduce the performance disparities between sensitive attributes by incorporating fairness regularization (FR) terms, as introduced in Section \ref{sec:methods_fairness}. We set the sensitive features as gender $\in \{Male, Female\}$ and ethnicity $\in \{Black, White, Others\}$. Our results are displayed on Tables \ref{tab:fairness_mimic} and \ref{tab:fpr_fnr_fairness}. These disparities in performance across different sub-populations are significantly reduced after fairness regularization is used. To see the effectiveness of the regularizer, we report the percentage decrease in $\Delta$FNR and $\Delta$FPR observed with respect to the unregularized checklist predictions for all pairs of sensitive subgroups. Similar fairness constraints (FC) can also be added to the ILP mean-thresholds baseline \citep{jin2022fair}. We include a separate constraint for each pair that restricts |$\Delta$FNR| and |$\Delta$FPR| to be less than $\epsilon$ = 0.05.

It is important to note that our approach minimizes the summation of $\Delta$FNR and $\Delta$FPR across all pairs of subgroups, but in the ILP we can specify a strict upper bound for each pair. Due to this, we might observe an increase in the gap for certain pairs in case of \method, but adjusting the relative weights of these terms in the loss equation helps in achieving optimal performance.  Although ProbChecklist had higher initial 
FNR/FPR values, the regularizer effectively reduces them to be comparable to those of ILP, particularly for the ethnicity pairs.

\subsection{Checklist of trees}
In this section, we demonstrate the flexibility of our \method~approach for learning other interpretable forms of logical decision rules and show that we can learn checklists of decision trees, where each concept is given by a decision tree. We first demonstrate that our approach can be used to learn a single decision tree and then show an example with a checklist that comprises three decision trees.

\subsubsection{Learning a simple decision tree}

To illustrate the ability of our method to learn a simple decision tree, we created a dataset where the input consists of two digits, $D1$ and $D2$, both  between 0 and 9, and generated a label according to the decision tree shown in Figure~\ref{fig:decision_tree}. We used soft concept extractors of the form $\psi(\mathbf{x}) = \sigma(\beta^T \mathbf{x})$, with $\beta$ a learnable vector with same dimension than $\mathbf{x}$, and $\sigma(\cdot)$ the sigmoid function. We note that because the first rule checks if $D1$ is even, there is no decision tree with the same number of layers that can achieve 100\% accuracy on this data.

We perform a train, validation, test split and report results on the test set. Figure~\ref{fig:decision_tree} shows the ground truth tree along with the learnt decision trees (with and without balanced tree regularization). We observe that the simple \method~strategy result in a reasonable predictive performance but also note the significant added value of the balanced tree regularization, which greatly improve the performance of the learnt tree.

\subsubsection{Learning checklists of trees}
Lastly, we integrate the decision trees into the \method~framework to construct a checklist composed of trees, i.e. each concept in the checklist is the output of a decision tree. The checklist is composed of a total of T trees, with $\tau$ the checklist thresholding parameter. 

\begin{wrapfigure}{R}{0.6\textwidth}
    \vspace{-1\baselineskip}
    {
    \centering
    \includegraphics[width=0.52\textwidth]{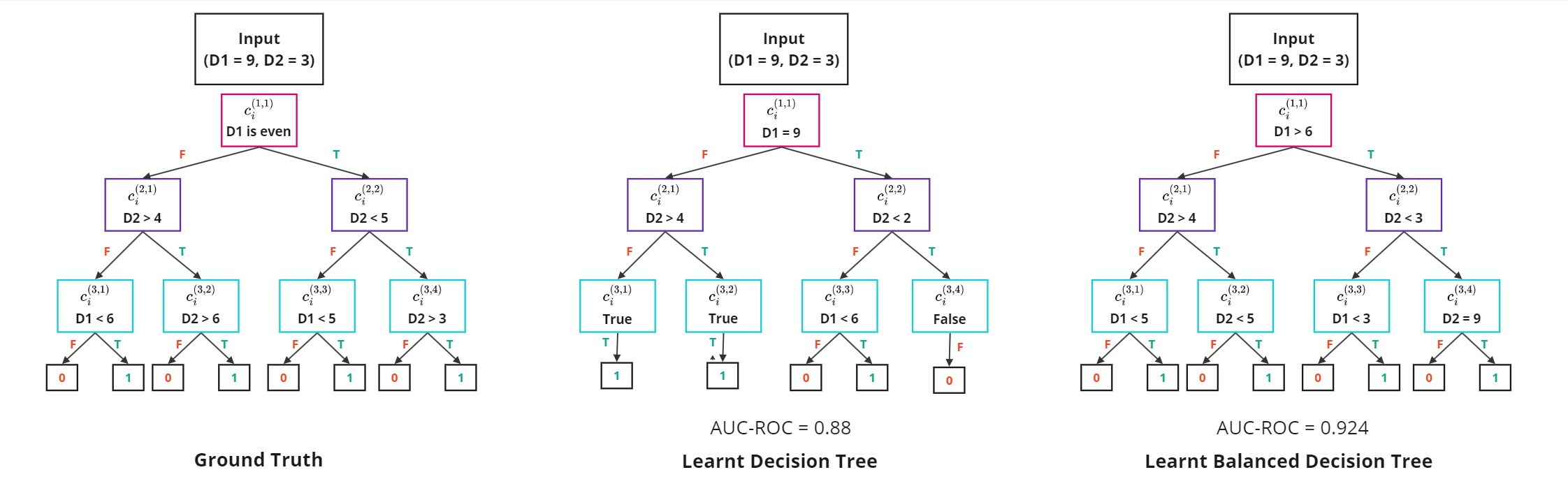}
    \caption{We illustrate both the ground truth decision tree and the learnt decision trees. The balanced tree was obtained after applying the proposed regularization scheme.}
    \label{fig:decision_tree}
    }
    \vspace{\baselineskip}
    {
    \centering
    \includegraphics[width=0.55\textwidth]{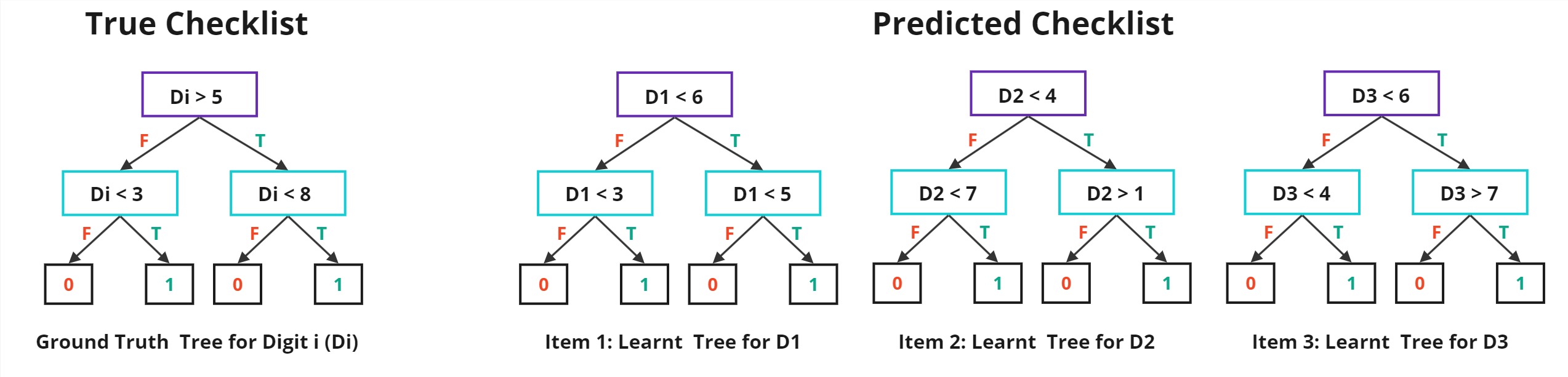}
    \caption{We illustrate both the true checklist with M = 3, T= 2 and the predicted checklist for the checklist of tree experiment. The resulting checklist had an AUC-ROC of 0.65}
    \label{fig:checklist_of_trees}
    }
    \vspace{-1\baselineskip}
\end{wrapfigure}

We generated a synthetic dataset by using three identical decision trees with $L=3$ layers, as shown in Figure~\ref{fig:checklist_of_trees}. Each sample comprises three digits ($D1$, $D2$, $D3$), between $0$ and $9$. Each decision tree operates on a single digit. For each sample, a label was created by combining the outcomes of all decision trees, and checking whether it exceeds $T = 2$. We used the same concept extractors as in the decision tree experiment.

The learnt checklist of decision trees is shown in Figure~\ref{fig:checklist_of_trees}, which achieves a classification performance of 0.65 AUC-ROC on the test split. It is important to note that this task is considerably more challenging than learning checklists with binary concepts or a single decision tree. In this experiment, our aim is to identify the concepts at each node of all the trees in the checklist using only the checklist outcome as the training signal. This constitutes a significantly weaker training signal compared to the previous experiments on a single decision tree.

\section{Discussion}
\paragraph{Performance of ProbChecklist.}
Through these experiments, we aim to showcase that ProbChecklist surpasses existing checklist methods and achieves comparable performance to MLP (non-interpretable) methods. The switch to learnable concepts explains the improvement in accuracy over checklist methods. These concepts capture more signal than fixed summary/concept extractors used in prior works to create binarized tabular data. It’s important to note that a checklist, due to its binary weights, has a strictly lower capacity and is less expressive than deep learning but possesses a more practical and interpretable structure. Despite this, it exhibits similar performance to an MLP.  We also want to emphasize that ProbChecklist provides a significantly broader applicability to multimodal datasets while maintaining performance comparable to checklist baselines on tabular datasets.  

\paragraph{Interpretability of checklist structure and learnt concepts.}
Although ProbChecklist employs a probabilistic objective for training the concept learners, the end classifier used for inference is, in fact, a discrete checklist. While this makes the classifier highly interpretable, it also shifts the focus of interpretability to the learnt concepts. We fully realize this trade-off and investigate existing techniques to maintain feature-space interpretability. For tabular datasets, ProbChecklist learns thresholds to binarize continuous features to give interpretable concepts. For time series and images, our initial results highlighted that the gradient attributions for different concepts learnt from one modality were very similar. We employ regularization terms (\ref{sec:tangos_method}) to enforce sparsity, avoid redundancy, and learn strongly discriminative features with high probability. We also use focused concept learners to avoid learning concepts that are functions of multiple modalities. Identifying patterns from the binarized concepts is primarily based on visual inspection and expert knowledge. Compared to images and time series, interpreting concepts from textual data is easier because text is constructed from tokens that are understandable to us. Therefore, for the medical abstract task, we identify the key tokens contributing to each concept. Lastly, it's important to note that ProbChecklist is a flexible framework, and other interpretable models can be easily incorporated as concept learners. \textcolor{black}{It has been shown that Neuro-Symbolic models trained end-to-end often fall prey to reasoning shortcuts and learn concepts with unintended meanings to achieve high accuracy. This makes it harder to interpret the learnt concepts. We encourage practitioners to adopt existing methods \cite{reasoning_shortcuts} that help mitigate these issues.} We offer an in-depth discussion on the interpretability of concepts for each modality in Appendix \ref{app:discussion_interpretability}.

\paragraph{Extensions of ProbChecklist.} The probabilistic programming framework offers remarkable flexibility, allowing for modifications to the logical rule to accommodate various discrete structures. In our experiments, we explore decision tree learning and introduce a checklist of decision trees. Additionally, we extend our approach to learning checklists with learnable integer weights (see Appendix \ref{app:integer_weight_probchecklist}). With this extension, we can assign a weight to each item, indicating its relative importance.

\newpage

\bibliography{main}
\bibliographystyle{tmlr}

\newpage
\appendix
\textbf{\Large Appendix}

\startcontents[sections]
\printcontents[sections]{l}{1}{\setcounter{tocdepth}{2}}

\section{Probabilistic Logic Programming} \label{app:problog}
\subsection{Derivations for the loss function}
\label{app:loss_derivations}

Given the individual probabilities of each concept $j$ being positive, $\mathbf{p}_i[j]$, the probability that the checklist outputs exactly $d$ positive concepts for sample $i$ is given by the binomial distribution:

\begin{align*}
    P_{\mathcal{P}_{\theta}}(\text{\# true concepts} = d) = \sum_{\sigma \in \Sigma^d} \prod_{j=1}^{d'}(\mathbf{p}_i[j])^{\sigma(j)} (1-\mathbf{p}_i[j])^{1-\sigma(j)}, 
\end{align*}

where $\Sigma_d$ is the set of selection functions $\sigma : [d'] \rightarrow \{0,1\}$ such that $\sum_{j=1}^{d'}\sigma(j) = d$. The probability that the checklist gives a positive prediction is the probability that the number of positive concepts is larger than $T$, we thus have 

\begin{align*}
    P_{\mathcal{P}_{\theta}}(\hat{y}_i = 1) = \sum_{d=T}^{d'} \sum_{\sigma \in \Sigma^d} \prod_{j=1}^{d'}(\mathbf{p}_i[j])^{\sigma(j)} (1-\mathbf{p}_i[j])^{1-\sigma(j)}.
\end{align*}

\subsection{\textcolor{black}{Probabilistic Logic Program Example}}
\textcolor{black}{
We describe the components of a Probabilistic Logical Program (PLP) $\mathcal{P}$ with a simple example and then relate it to our checklist learning setup. $\mathcal{P}$ consists of a set of $N$ probabilistic facts $U = \{U_1,...,U_N\}$ and $M$ logical rules $F = \{f_1,...f_M\}$. 
Let's first consider a non-checklist example, the well-known Alarm Bayesian problem [1]. This example features a neighborhood that can be subject to burglaries and earthquakes. Whenever such events happen, an alarm goes off in each house and people at home make a call to the emergency station. 
}

\textcolor{black}{
In this example, the probability of a burglary occurring in the neighborhood is 0.1, and the probability of an earthquake happening in that area is 0.2. Now, let's say there are two individuals, Mary and John, who live in this locality. The probability that Mary will be home is 0.4, and the probability that John will be home is 0.5. 
}

\textcolor{black}{
The probabilistic facts (N=4) in this setting are:
\begin{itemize}
    \item $U_1$ : $P$(earthquake) = 0.2
    \item $U_2$ : $P$(burglary) = 0.1
    \item $U_3$ : $P$(at\_home(John)) = 0.5 
    \item $U_4$ : $P$(at\_home(Mary)) = 0.4 
\end{itemize} 
}

\textcolor{black}{
This knowledge of the world can be easily encoded in Probabilistic Logic Programs (PLPs) using these rules ($M = 3$):
\begin{itemize}
    \item $f_1:$ alarm :- burglary (a burglary will always trigger an alarm)
    \item $f_1:$ alarm :- earthquake. (an earthquake will always trigger an alarm)
    \item $f_3:$ calls(X) :- alarm, at\_home(X). (if there is an alarm, and X is at home, then X calls the station)
\end{itemize}
}

\textcolor{black}{
Probabilistic programming allows to programmatically encode the knowledge of such problems and compute the probability of different queries. A possible query $q$ for this problem could be: “What is the probability that Mary makes the call?”
}

\textcolor{black}{
Or, $q=P$(calls(Mary)).
}

\textcolor{black}{
While different implementations of probabilistic programming differ in their way to solve this problem, the simplest possible is to list all possible worlds that agree with the query and summing their probabilities. In this example, all possible worlds are given by the combinations of possible probabilistic facts. 
}

\textcolor{black}{
For instance, $w_1 =$ \emph{earthquake, not burglary, not at\_home(John), at\_home(Mary)} is a possible world. Furthermore, $w_1$ agrees with the observation \emph{call(Mary)} because there is an earthquake Mary is at home, so Mary will call the station. The probability of $w_1$ is also $P(\text{\emph{earthquake}}) * (1-P(\text{\emph{burglary}})) * (1-P(\text{\emph{John}})) * P(\text{\emph{Mary}}) = 0.2 * 0.9 * 0.5 * 0.4 = 0.036$.
The final answer would sum the probabilities for all such worlds that agree with the query : \emph{(earthquake, burglary, John, Mary)}, \emph{(not earthquake, burglary, John, Mary)}, \emph{(earthquake, not burglary, John, Mary)}, \emph{(earthquake, burglary, not John, Mary)}, \emph{(not earthquake, burglary, not John, Mary)}, \emph{(earthquake, not burglary, not John, Mary)}.
}

\subsection{\textcolor{black}{Probabilistic Logic Program for Learning Checklists}}
\textcolor{black}{
Our checklist learning setup is analogous but provides a different set of rules that combine the original probabilistic facts (concepts) into the final prediction. 
}

\textcolor{black}{
Suppose our dataset comprises $n$ points, and we are examining sample $i$ with its feature vector denoted as $x_i$. Our checklist has M concepts in total, and the probability that each concept is true for sample $i$ is represented by the vector $p_i \in [0, 1]^M$. These constitute the $M$ probabilistic facts for sample $i$ $\{U_1^{(i)}, …, U_M^{(i)}\}$. 
}

\textcolor{black}{
\textit{Probabilistic Facts:}\\
For all $m \in [M]$:
\begin{itemize}
    \item $U_m^{(i)}: P$(concept $m$ is true for sample $i$) = $p_i[m]$
\end{itemize}
 }

\textcolor{black}{
The total number of probabilistic facts are $mn$.
The logical rule is described by Equation \ref{eq:checklist} in the paper, which classifies a sample as positive if the number of true concepts for that sample exceeds a checklist threshold $T$. 
}

\textcolor{black}{
\textit{Logic Rules:}\\
For all $i \in [n]$: 
\begin{itemize}
    \item $f_i$: sample\_positive(i) :- at\_least\_T\_concepts\_true (Sample i is positive if at least T concepts out of M are true for sample i)
\end{itemize}
There are $n$ logical rules in total.
}

\textcolor{black}{
Probabilistic graphical models are constructed using these probabilistic facts and logical rules, allowing PLP to perform inference on knowledge graphs by calculating the probability of a query $q$. In our context, we query “the probability that Sample i will be classified as positive." or $q = P(sample\_positive(i))$.
This query is computed by summing over the probabilities of combinations that satisfy the condition $f_i$, meaning we consider all possible combinations where at least $T$ concepts are true, resulting in a positive classification. 
}

\textcolor{black}{
Regarding our hybrid framework, the probabilistic facts are generated by a neural network (concept learner) with learnable weights, referred to as a neural predicate $U^\theta$, where $\theta$ represents the weights. These weights are trained to minimize a loss function based on the probability of a query $q$ and the ground truth. Given that we are dealing with binary classification problems, we employ Binary Cross Entropy as our loss function.
}

\subsection{\textcolor{black}{Probabilistic Logic Program for Learning Decision Trees}}
\textcolor{black}{
We denote the depth of the tree with L $(>0)$ and the layers of the tree with $l \in {1, \dots, L}$. We assume a balanced binary tree structure of depth L containing $2^{l-1}$ nodes in layer $l$. The last layer contains leaf nodes that hold the model predictions. The remaining nodes represent M binary partitioning rules, where $M = \sum_{l = 1}^{L-1} 2^{l-1} = 2^{L-1} - 1$. We index nodes with $(j,l)$ where $j \in {1, \dots, 2^{l-1}}$ represents the index of the node in layer $l \in {1, \dots, L-1}$. Each node takes an input vector $\mathbf{x}_i$ and outputs a boolean $c_i^{(j,l)} = \phi^{(j,l)}(\mathbf{x}_i)$. We illustrate the structure of the tree in Figure \ref{fig:decision_tree}. }

\textcolor{black}{
Each path of the decision tree results in a logical rule whose outcome is embedded in the value of it's leaf node. Hence, we can filter out the path which lead to a positive outcome and construct the following logical rule F, for L = 4:}

\begin{equation}
    \label{eq:decision_tree_prob_program}
    F (\hat{y_i} = 1) := (\neg c_i^{(1,1)} \land \neg c_i^{(2,1)} \land c_i^{(3,2)}) 
    \lor (\neg c_i^{(1,1)} \land c_i^{(2,1)} \land c_i^{(3,4)}) 
    \lor (c_i^{(1,1)} \land \neg c_i^{(2,2)} \land c_i^{(3,6)}) 
    \lor (c_i^{(1,1)} \land c_i^{(2,2)} \land c_i^{(3,8)}).
\end{equation}

\textcolor{black}{
Let $\mathbf{p}_i[j,l] = P(c_i[j,l] = 1)$ be the probabilistic facts for all $j \in \{1, \dots, 2^{l-1}\}$ and $l \in [L-1]$. }

\textcolor{black}{
Now, the probability that the output of the decision tree is positive can be written as:}
\begin{equation}
    \mathcal{P}_{\text{tree}} = (1 - \mathbf{p}_i^{(1,1)}) (1 -  \mathbf{p}_i^{(2,1)}) \mathbf{p}_i^{(3,2)} + 
    (1 - \mathbf{p}_i^{(1,1)}) \mathbf{p}_i^{(2,1)} \mathbf{p}_i^{(3,4)} +
    \mathbf{p}_i^{(1,1)} (1 - \mathbf{p}_i^{(2,2)}) \mathbf{p}_i^{(3,6)} +
    \mathbf{p}_i^{(1,1)} \mathbf{p}_i^{(2,2)} \mathbf{p}_i^{(3,8)}
\end{equation}

\section{Complexity Analysis}
\label{app:complexity}

\begin{table}[ht]
    \centering
    \begin{tabular}{c|ccc}
        \toprule
        \textbf{Training Time (seconds)} & \textbf{Number of concepts} & \textbf{MIP Checklist} & \textbf{ProbChecklist} \\
        \midrule
        PhysioNet Tabular ($d'_k = 1$) & 10 & 667 ± 80.087  & 995.0 ± 35.014 \\ 
        MNIST ($d'_k$ = 4) & 16 & 3600  & 2773 ± 22.91 \\
        \bottomrule
    \end{tabular}
    \caption{ProbChecklist exhibits exponential complexity. To gauge the impact of this limitation, we provide training times (in seconds) for both MIP Checklist and ProbChecklist. It's crucial to highlight that while MIP Checklist performs effectively with tabular data, successfully uncovering the optimal solution, its performance is poor when applied to MNIST synthetic setup. Even when we set the runtime for Gurobi solver as 1 hour, it struggles to achieve optimal solutions for many cases. On the other hand, ProbChecklist stands out as more reliable, capable of performing end-to-end training and successfully learning the optimal solution.}
    \label{tab:training_time}
\end{table}

Let $d$ be the number of positive concepts for sample $i$. Probabilistic Checklist Objective is defined as $P(d \geq T)$, i.e. the sample is classified as positive if $T$ or more concepts (out of $M$) are true. We discuss the space and time complexity for training and inference separately.

\textbf{Step-wise Breakdown for Training.}
\begin{itemize}
    \item \textbf{Learning concepts probabilities using a concept learner.} The computational complexity is contingent upon the specific neural network employed. In our analysis, we utilize LSTMs, CNNs, and pre-trained BERT models with fixed weights, ensuring that computations can be performed in polynomial time. 
    \item \textbf{Loss computation.} We need to compute the probabilistic query $\mathbf{P(d \geq T)}$. This involves iterating over all possible $2^M$ combinations of true concepts $\forall \ d \geq T$. The probability of each combination is obtained by taking product of concept probabilities. \\ Worst case time complexity: $\mathbf{O(M2^M)}$ $\implies$ \textbf{exponential}.
\end{itemize}

We implement the time-efficient $\mathbf{O(1)}$ version of this method by caching the combination tensor in memory. In this situation, the memory cost becomes $\mathbf{O(M2^M)}$. The dimensions of tensor that stores all the combinations is $[2^M, M]$. This is an acceptable trade-off as it enables us to run experiments in reasonable time.  

\textbf{Inference.} 
Inference involves a single pass through the concept learner. Since loss does not need to be computed, the inference time is the same for both memory- and time-efficient implementations of ProbChecklist and has the same complexity as the forward propagation of the concept learner.  

\textbf{Training Details.}

We trained the time-efficient implementation with memory complexity $O(M*2^M)$ on one NVIDIA A40 GPU with 48GB memory which can fit up to $M = 30$. 
Exponential complexity is primary reason for performing feature selection and limiting the modalities to 10 and learning up to 3 features per modality. 

A fruitful future direction would be to study approximations to explore a smaller set of combinations.

\section{Hyperparameter Tuning}
ProbChecklist framework allows experts to design user-centric checklists. \textcolor{black}{The hyperparameters of ProbChecklist are concepts learned per modality $d_k’$, the parameters of the different soft concept extractors, the checklist threshold $T$ and the thresholding parameter $\tau$.} Hyperparameters $d'_k and \ T$ represent the structure/compactness of the checklist but alone aren't sufficient to garner information about the checklist’s performance. Different $d'_k$ are tried for each modality in an increasing fashion to find the one that performs best. Sensitivity analysis to study the relation between $d'_k$ and performance (Figure \ref{fig:sensitivity_mnist}) suggests that the performance saturates after a certain point. This point can be determined experimentally for different modalities. M, total concepts in the checklist, is obtained by pruning concepts that are true for insignificant samples. Our experiments showed that pruning was not required since all the concepts were true for a significant fraction of samples. This indicates the superior quality of the concepts. 

We try different values of T in the range [M/4, M/2] (total items M) to find the most performant model. However, this hyperparameter tuning doesn't contribute to the computational cost. For $d'_k$, we only only 2-3 values, and use the same value for all the features. Domain experts will be more equipped to choose these values based on their knowledge of the features. For example, if we are recording a time series feature known to stay stable (not fluctuate much), then low $d'_k$ is sufficient. The value of $d'_k$ also depends on the number of observations (most blood tests aren't performed hourly, but heart rate and oxygen saturation are monitored continuously).
\label{app:hyperparameter_tuning}

\subsection{\textcolor{black}{Impact of hyperparameters on performance}}
\textcolor{black}{In Tables  \ref{tab:MNIST_dk_model_checklist} and \ref{tab:physionet_dk_model_checklist} of Appendix \ref{app:sensitivity_analysis}, we present results demonstrating how the performance of ProbChecklist changes with the number of concepts learned per modality $d_k'$. We noted a significant improvement in performance when $d'_k$ increased from 1 to 2, suggesting that the samples contain complex features that a single concept cannot represent. Once all the valuable information is captured, performance reaches a saturation point and the performance plateaus with $d'_k$. This is also illustrated in Figure \ref{fig:sensitivity_mnist} for the MNIST Checklist dataset.\\
Additionally, we provide further results for the MNIST dataset to show the impact of the checklist threshold $T$ and the concept probability threshold $\tau$ hyperparameters on the performance of ProbChecklist. In Table \ref{tab:impact_hyperparameter_T_performance}, we observe a decline in performance as we deviate from the optimal value of $T$ in either direction.}

\begin{table}[h]
\centering
\begin{tabular}{llll}
\toprule
\textbf{$T$} & \textbf{Accuracy} & \textbf{Precision} & \textbf{Recall} \\
\midrule
14         & 63.640            & 0.360              & 1.000           \\
12         & 81.440            & 0.525              & 0.977           \\
10         & 96.440            & 0.885              & 0.949           \\
9          & 97.200            & 0.933              & 0.930           \\
8          & 91.800            & 0.725              & 0.965           \\
6          & 87.480            & 0.626              & 0.965           \\
5          & 83.880            & 0.560              & 0.992           \\
4          & 70.120            & 0.406              & 0.994           \\
2          & 63.840            & 0.361              & 1.000 \\
\bottomrule
\end{tabular}
\caption{\textcolor{black}{Performance of ProbChecklist with varying $\mathbf{T}$ on MNIST Checklist Dataset for $d_k’ = 4$ and $\tau = 0.5$}}
\label{tab:impact_hyperparameter_T_performance}
\end{table}

\begin{table}[h]
\centering
\begin{tabular}{llll}
\toprule
\textbf{$\tau$} & \textbf{Accuracy} & \textbf{Precision} & \textbf{Recall} \\
\midrule
0.75            & 71.29             & 0.478              & 0.911           \\
0.6             & 96.6              & 0.892              & 0.949           \\
0.5             & 97.2              & 0.933              & 0.930           \\
0.4             & 63.92             & 0.362              & 1.000           \\
0.25            & 38.13             & 0.335              & 0.926   \\
\bottomrule
\end{tabular}
\label{tab:impact_hyperparameter_tau_performance}
\caption{\textcolor{black}{Performance of ProbChecklist with varying $\tau$ on MNIST Checklist Dataset for $d_k’ = 4$ and $T = 9$}}
\end{table}

\section{Conciseness of checklists over decision trees}
\label{app:decision_tree_vs_checklist}
\textcolor{black}{Checklists and decision trees are both known for their interpretability in classification tasks, but there exists a tradeoff between their expressivity and interpretability. Depending on the use case, one method might be more suitable than the other.}  While trees grow exponentially with the number of rules, checklists only grow linearly. Yet, the increased complexity of the structure of trees makes them also more expressive. Each decision in a tree only applies to its specific lineage. In contrast, each rule in a checklist applies globally. Despite their lower expressivity, the conciseness of checklists can make them more practical and interpretable than decision trees.


\section{Experiment Details}
\label{app:experiments}

In this section we provide additional details about the experiments.

\subsection{Baselines}
We compare the performance of our method against standard classifiers, including logistic regression (LR) and a classical multilayer perceptron (MLP). For a fair comparison, the process to obtain the $\mathbf{M}$-dimensional concept probabilities is identical for all methods.

For the clinical datasets, we also use architectures tailored for learning checklists, namely Unit weighting, SETS checklist, Integer Linear Program \citep{zhang2021learning} with mean thresholds, and MIP from \citet{makhija2022learning}. These checklist-specific architectures lack the ability to process complex data modalities such as time series data, so we use features mean values for training them.

\textbf{Unit weighting} distils a pre-trained logistic regression into a checklist, as also used in \citet{zhang2021learning}. First, we binarise the data by using the mean feature values as thresholds. By training a simple logistic regression on the continuous data, we can determine the features with negative weights and replace them with their complements. A hyperparameter $\beta$ to discard features with weights in the range $[-\beta,\beta]$. Subsequently, we train logistic regression models on the binarized data by varying $T \in [0, M]$ (the checklist threshold) to identify the optimal value of T. 

\textbf{SETS checklist} \citep{makhija2022learning} is an improvement over Unit Weighting, where mean values are directly utilized for binarization. This method aims to learn feature thresholds ($\mu$) while training a logistic regression in a temperature parameter ($\tau$) followed by unit weighting to generate the checklist.
\begin{align}
    \sigma(\frac{X-\mu}{\tau}) \ \ \ 
    \textrm{where, } \ \sigma \ \ \textrm{sigmoid function.}
\end{align} 

For \textbf{Integer Linear Program with Mean Thresholds}, we re-implement the ILP from \citet{zhang2021learning} on the mean binarized data and solve the optimization problem using Gurobi. Comparing our method to this baseline provides the most relevant benchmark for evaluation.

\textbf{Mixed Integer Program} \citep{makhija2022learning} is a modification of the ILP described above and contains additional constraints to learn thresholds instead of using fixed ones. Again, we re-implement the optimization problem and solve it on Gurobi.

\textbf{BERT/LSTM/CNN + LR/MLP} employs the \textcolor{black}{focused concept learners used in \method} \ followed by a logistic regression layer (or an MLP) on the combination of last layer's embeddings for each modality. Finally, cross-entropy loss is calculated on the predicted class, i.e. $\mathbf{\mathbb{P}(\hat{y} = 1)}$.

\subsection{Reasons for creating MNIST synthetic Dataset}
Since there aren't any real-world datasets with a ground truth checklist, we first validate our idea on a synthetic setup where the checklist is known. The key points we examine were the performance of our probabilistic checklist objective compared to standard MLP and LR architectures, the comprehensibility of the concepts learnt, and how well our approach recovers the defined rules.

The ideal dataset needed to substantiate our approach comprises different features (or modalities) and a defined rule/condition for each modality to assign the samples a ground truth label. Similar to the structure of a checklist, the sample label depends on how many rules.

These points were taken into account while designing the MNIST Synthetic Checklist.

\subsection{Datasets}
\label{app:datasets}
\textbf{MNIST Checklist Dataset} \\
 We generate a synthetic dataset from MNIST and define a rule set based on which the instances are classified as 0 or 1. We divide all the images in the MNIST dataset into sequences of $\mathbf{K}$ images, and each sequence forms one sample. In the experiments, we set $\mathbf{K = 4}$, which creates a dataset consisting of 10500 samples for training, 4500 for validation, and 2500 for testing. We start by learning $\mathbf{M}$ concepts from images directly using $\mathbf{K}$ concepts learners, which will later be used for recovering the set checklist. Concept learners would ensure that additional information beyond the labels is also captured.

We construct a ground truth checklist using the image labels to simulate a binary classification setup. A sample is assigned $\mathbf{y = +1}$ if $\mathbf{T}$ out of $\mathbf{K}$ items are true. The final checklist used for experimentation is:

\newlist{todolist}{itemize}{2}
    \setlist[todolist]{label=$\square$}
    \begin{itemize}
    \item\textbf{Ground Truth Checklist ($\mathbf{T} = 3$, $\mathbf{K} = 4$)}
    \begin{todolist}
        \item $\mathbf{Image\, 1} \in\{0,2,4,6,8\}$
        \item $\mathbf{Image\, 2} \in\{1,3,5,7,9\}$
        \item $\mathbf{Image\, 3} \in\{4,5,6\}$
        \item $\mathbf{Image\, 4} \in\{6,7,8,9\}$
    \end{todolist}
    \end{itemize}
The final dataset contains 20.44\% positive samples.

\textbf{PhysioNet Sepsis Tabular Dataset} \\ We use the PhysioNet 2019 Early Sepsis Prediction time series dataset \citep{reyna2019early}, which was collected from the ICUs of three hospitals. Hourly data of vital signs and laboratory tests are available for the thirty-four non-static features in the dataset. Additionally, four static parameters, gender, duration, age, and anomaly start point, are included. We treat the occurrence of sepsis in patients as the binary outcome variable. 

The original dataset contains 32,268 patients, with only 8\% sepsis patients. We create five subsets with 2200 patients, of which nearly 37\% are positive. For this task, we use basic summary extraction functions such as the mean, standard deviation, and last entry of the clinical time series of each patient to tabulate the dataset. We subsequently perform feature selection and only keep the top ten informative features ($\mathbf{K = 10}$) based on logistic regression weights.

\textbf{PhysioNet Sepsis Time Series Data} \\
We use the PhysioNet 2019 Early Sepsis Prediction \citep{reyna2019early} for this task also. The preprocessing steps and formation of subsets are the same as described above. Instead of computing summary statistics from the clinical time series, we train different concept learners to capture the dynamics of the time series. This allows us to encapsulate additional information, such as sudden rise or fall in feature values as concepts. 

The processed dataset contains 2272 training, 256 validation, 256 testing samples, and 56 time steps for each patient. We fix $\mathbf{K = 10}$, i.e. use the top ten out of thirty-four features based on logistic regression weights. The selected features are \{temperature, TroponinI, FiO2, WBC, HCO3, SaO2, Calcium, HR, Fibrinogen, AST\}.

\textbf{MIMIC-III Mortality Time Series Data}\\
We use the clinical time series data from the MIMIC III Clinical Database \citep{mimic} to perform mortality prediction on the data collected at the intensive care facilities in Boston. We directly use the famous MIMIC-Extract \citep{wang2020mimic_extract} preprocessing and extraction pipeline to transform the raw Electronic Medical Records consisting of vital signs and laboratory records of more than 50,000 into a usable time series format. The processed dataset had a severe class imbalance with respect to the mortality prediction task. We randomly sample from the negative class to increase the percentage to 25\% positive patients. Subsequently, we divide the samples into training, validation, and test subsets comprising 6912, 768, and 2048 patients. Like the PhysioNet experiments, we retain the top ten time-series features ($\mathbf{K = 10}$). These features are \{heart rate, mean blood pressure, diastolic blood pressure, oxygen saturation, respiratory rate, glucose, blood urea nitrogen, white blood cell count, temperature, creatinine\}. To evaluate the proposed fairness regularizer, we introduce the one-hot encodings of two categorical features, gender and ethnicity. 

\textbf{Medical Abstracts TC Corpus.} 
We work with the clinical notes dataset designed for multi-class disease classification \citep{medical_abstracts_corpus}. However, we only focus on neoplasm detection. This subset contains 14438 total samples consisting of 11550 training samples and 2888 testing samples. Out of these, 2530 were positive in training set and 633 were positive in the testing set. To reduce class imbalance of 21.9\%, the negative set was subsampled to result in an ratio of positive samples to negative samples of 35\%. We use medical abstracts which describe the conditions of patients. Each note is 5-6 sentences long on average. We consider the whole text as one modality ($\mathbf{K = 1}$).

\subsection{Sepsis Prediction using Static Tabular Data}
\label{PhysioNet_tabular}
\textbf{Data.} We use the PhysioNet 2019 Early Sepsis Prediction time series dataset \citep{reyna2019early}. We transformed this dataset into tabular data by using basic summary extraction functions such as the mean, standard deviation, and last entry of the clinical time series of each patient. We subsequently perform feature selection and only keep the top ten informative features ($\mathbf{K = 10}$) based on logistic regression weights.

\textbf{Architecture.} Since each feature is simply a single-valued function ($\mathbf{x_i \in \mathbb{R}}$), the concept learners are single linear layers followed by a sigmoid activation function. The remaining steps are the same as the previous task, i.e. these concept probabilities are then passed to the probabilistic logic module for $\mathbf{\mathbb{P}(d>T)}$ computation and loss backpropagation.

\textbf{Baselines.} We use standard baselines like MLP and LR, along with checklist-specific architectures, namely Unit weighting, SETS checklist, Integer Linear Program \citep{zhang2021learning} with mean thresholds, MIP from \citet{makhija2022learning}. Unit weighting distils a pre-trained logistic regression into a checklist, as also used in \citet{zhang2021learning}. SETS checklist \citep{makhija2022learning} consists of a modified logistic regression incorporating a temperature parameter to learn feature thresholds for binarization, this is followed by unit weighting. 

\textbf{Results.} We present the results of \method \ and baselines in Table \ref{tab:physionet1}. Our performance is slightly lower than the MIP baseline. The highest accuracy is achieved by an MLP, but that comes at the cost of lower interpretability.

\begin{table}[h]
    \centering
    \begin{adjustbox}{width=\linewidth}
    \begin{tabular}{lcccccc}
    \toprule
    \textbf{Model} & \textbf{Accuracy} & \textbf{Precision} & \textbf{Recall} & \textbf{Specificity} & \textbf{M} & \textbf{T} \\
    \midrule
     Dummy Classifier    & 37.226 & 0.372 & 1 & 0.628& -& - \\
     MLP Classifier   & \textbf{64.962 ± 2.586} & 0.5726 ± 0.046 & 0.483 ± 0.074 & 0.76043 ± 0.0562 & - & - \\
     Logistic Regression   & 62.555 ± 1.648 & 0.624 ± 0.0461 & 0.144 ± 0.0393 & \textbf{0.9395 ± 0.0283} & - & - \\
     Unit Weighting & 58.278 ± 3.580 & 0.521 ± 0.093 & 0.4386 ± 0.297 & 0.6861 ±  0.251 & 9.6 ± 0.8 & 3.2 ± 1.16 \\
     SETS Checklist & 56.475 ± 7.876 & 0.517 ± 0.106 & \textbf{0.6639 ± 0.304} & 0.494 ± 0.3195 & 10 ± 0 & 6 ± 0.632\\
     ILP mean thresholds  & 62.992 ± 0.82 & 0.544 ± 0.087 & 0.1196 ± 0.096 & 0.9326 ± 0.0623 & 4.4 ± 1.01 & 2.8 ± 0.748\\
     MIP & 63.688 ± 2.437 & 0.563 ± 0.050 & 0.403 ± 0.082 & 0.7918 ± 0.06 & 8 ± 1.095 & 3.6 ± 0.8 \\
     Concepts + LR  & 61.168 ± 1.45 & 0.565 ± 0.059 & 0.324± 0.15 & 0.805 ± 0.1 & - & - \\
     \hline
     \textbf{\method}& 62.579 ± 2.58 & \textbf{0.61 ± 0.076} & 0.345 ± 0.316 & 0.815 ± 0.185S & 10 & 3.6 ± 1.2\\

     \bottomrule
    \end{tabular}
    \end{adjustbox}
    \caption{Performance results for Sepsis Prediction task using Tabular Data.}
    \label{tab:physionet1}
\end{table}

\subsection{Sepsis Prediction using time series data}
Instead of computing summary statistics from the clinical time series, we train different concept learners to capture the dynamics of the time series. This encapsulates additional information, such as sudden rise or fall in feature values as concepts. Like the setup for tabular dataset, we fix $\mathbf{K = 10}$, i.e. use the top ten features based on logistic regression weights.

We define $\mathbf{K}$ CNNs with two convolutional layers which accept one-dimensional signals as the concept learners for this task. We summarise the results for this task in Table \ref{tab:physionetts}. We find that \method \ improves upon the baselines herein as well.

\begin{table}[h]
    \centering
    \begin{adjustbox}{width=\linewidth}
    \begin{tabular}{lccccccc}
    \toprule
    \textbf{Model} & \textbf{Accuracy} & \textbf{Precision} & \textbf{Recall} & \textbf{Specificity} & $\mathbf{d'_k}$ & \textbf{M} & \textbf{T} \\
    \midrule
 Logistic Regression & 60.627 ± 1.379 & 0.4887 ±  0.106 & 0.1843 ± 0.073 & 0.8792 ± 0.048 & \multirow{5}*{1} & - & - \\ 

 Unit Weighting & 60.532 ± 1.567 & 0.4884 ±  0.087 & 0.1882 ± 0.102 & 0.8745 ± 0.066 && 5 ± 0.63 & 9.2 ± 0.748 \\ 

 ILP mean thresholds & 62.481 ± 0.426 & 0.529 ± 0.242 & 0.0529 ± 0.051 & 0.964 ± 0.031 && 3.4 ± 1.496 & 3.6 ± 1.85 \\

 MIP Checklist & 60.767 ± 1.022 & 0.5117 ± 0.055 & 0.142 ± 0.05 & \textbf{0.912 ± 0.036} && 3.5 ± 0.866 &6.5 ± 1.5 \\

 CNN + MLP & 63.465 ± 2.048 & 0.585 ± 0.05 & 0.234 ± 0.071 & 0.895 ± 0.021 & - && - \\ 
 
 CNN + MLP (all features) & \textbf{67.19 ± 0.32}	& 0.526	± 0.065 & 0.281 ± 0.041  & 0.835 ± 0.033 && - & - \\ 

\midrule

 \textbf{\method} & \textbf{63.671 ± 1.832} & \textbf{0.609 ± 0.115} & \textbf{0.354 ± 0.157} & 0.823 ± 0.108 & 3 & 4.4 ± 1.356 & 30 \\

\bottomrule
\end{tabular}
\end{adjustbox}
\caption{Performance results for Sepsis Prediction task using Tabular Data.}
\label{tab:physionetts}
\end{table}

\subsection{Sensitivity Analysis}
We study the sensitivity of our approach to the number of learnable concepts. In particular, we compare the performance of \method with increasing number of learnt concepts ($d'_k$). The observed trend for both MNIST (Table \ref{tab:MNIST_dk_model_checklist}) and PhysioNet (Table \ref{tab:physionet_dk_model_checklist}) datasets exhibited similarities. We noted a significant improvement in accuracy, precision, and recall when $d'_k$ increased from 1 to 2, suggesting that the samples contain complex features that a single concept cannot represent. Once all the valuable information is captured, accuracy reaches a saturation point and the performance plateaus with $d'_k$.
\label{app:sensitivity_analysis}

\begin{table}[h] %
\centering
\begin{adjustbox}{width=\linewidth}
\begin{tabular}{clcccccc}
\toprule
\textbf{$\mathbf{d'_k}$} & \textbf{Evaluation} & \multicolumn{1}{c}{\textbf{Accuracy}} & \multicolumn{1}{c}{\textbf{Precision}} & \multicolumn{1}{c}{\textbf{Recall}} & \multicolumn{1}{c}{\textbf{Specificity}} & \multicolumn{1}{c}{\textbf{T}} & \multicolumn{1}{c}{\textbf{M}} \\ \midrule
\multirow{2}*{1} & Model & \begin{tabular}[c]{@{}r@{}}86.04 ± 0.463\end{tabular} & \begin{tabular}[c]{@{}r@{}}0.814 ± 0.027\end{tabular} & \begin{tabular}[c]{@{}r@{}}0.412 ± 0.021\end{tabular} & \begin{tabular}[c]{@{}r@{}}0.976 ± 0.004\end{tabular} & \multirow{2}*{3} & \multirow{2}*{4} \\ 
 & Checklist & \begin{tabular}[c]{@{}r@{}}72.63 ± 1.42\end{tabular} & \begin{tabular}[c]{@{}r@{}}0.427 ± 0.013\end{tabular} & \begin{tabular}[c]{@{}r@{}}\textbf{0.979 ± 0.005}\end{tabular} & \begin{tabular}[c]{@{}r@{}}0.661 ± 0.018\end{tabular} &  &  \\ \midrule
\multirow{2}*{2} & Model & \begin{tabular}[c]{@{}r@{}}96.888 ± 0.064\end{tabular} & \begin{tabular}[c]{@{}r@{}}0.925 ± 0.008\end{tabular} & \begin{tabular}[c]{@{}r@{}}\textbf{0.922 ± 0.012}\end{tabular} & \begin{tabular}[c]{@{}r@{}}0.981 ± 0.003\end{tabular} & \multirow{2}*{5} & \multirow{2}*{8} \\

 & Checklist & \begin{tabular}[c]{@{}r@{}}92.768 ± 1.6\end{tabular} & \begin{tabular}[c]{@{}r@{}}0.752 ± 0.045\end{tabular} & \begin{tabular}[c]{@{}r@{}}0.97 ± 0.006\end{tabular} & \begin{tabular}[c]{@{}r@{}}0.917 ± 0.02\end{tabular} &  &  \\ \midrule
\multirow{2}*{3} & Model & \begin{tabular}[c]{@{}r@{}}\textbf{97.064 ± 0.187}\end{tabular} & \begin{tabular}[c]{@{}r@{}}0.94 ± 0.008\end{tabular} & \begin{tabular}[c]{@{}r@{}}0.915 ± 0.013\end{tabular} & \begin{tabular}[c]{@{}r@{}}0.985 ± 0.002\end{tabular} & \multirow{2}*{7} & \multirow{2}*{12} \\ 
 & Checklist & \begin{tabular}[c]{@{}r@{}}96.52 ± 0.33\end{tabular} & \begin{tabular}[c]{@{}r@{}}0.9 ± 0.009\end{tabular} & \begin{tabular}[c]{@{}r@{}}0.933 ± 0.008\end{tabular} & \begin{tabular}[c]{@{}r@{}}0.973 ± 0.002\end{tabular} &  &  \\ \midrule
\multirow{2}*{4} & Model & \begin{tabular}[c]{@{}r@{}}97.04 ± 0.177\end{tabular} & \begin{tabular}[c]{@{}r@{}}0.937 ± 0.005\end{tabular} & \begin{tabular}[c]{@{}r@{}}0.917 ± 0.006\end{tabular} & \begin{tabular}[c]{@{}r@{}}0.984 ± 0.001\end{tabular} & \multirow{2}*{\begin{tabular}[c]{@{}r@{}}8.4 ± 1.2\end{tabular}} & \multicolumn{1}{l}{\multirow{2}*{16}} \\ 
 & Checklist & \begin{tabular}[c]{@{}r@{}}\textbf{96.808 ± 0.24}\end{tabular} & \begin{tabular}[c]{@{}r@{}}0.917 ± 0.015\end{tabular} & \begin{tabular}[c]{@{}r@{}}0.929 ± 0.01\end{tabular} & 0.978 ± 0.004 &  & \multicolumn{1}{l}{} \\ \midrule
\multirow{2}*{5} & Model & \begin{tabular}[c]{@{}r@{}}97.032 ± 0.135\end{tabular} & \begin{tabular}[c]{@{}r@{}}\textbf{0.943 ± 0.005}\end{tabular} & \begin{tabular}[c]{@{}r@{}}0.91 ± 0.01\end{tabular} & \begin{tabular}[c]{@{}r@{}}\textbf{0.986 ± 0.001}\end{tabular} & \multirow{2}*{\begin{tabular}[c]{@{}r@{}}9.4 ± 1.36\end{tabular}} & \multirow{2}*{20} \\ 
 & Checklist & \begin{tabular}[c]{@{}r@{}}96.68 ± 0.269\end{tabular} & \begin{tabular}[c]{@{}r@{}}\textbf{0.918 ± 0.013}\end{tabular} & 0.92 ± 0.009 & \textbf{0.979 ± 0.004} &  &  \\ \bottomrule
 
\end{tabular}
\end{adjustbox}
\caption{Performance of \method \ with varying $\mathbf{d'_k}$ on MNIST Checklist Dataset}
\label{tab:MNIST_dk_model_checklist}
\end{table}

\subsection{Checklist Optimization}
The checklist generation step from the learnt concepts allows users to tune between sensitivity-specificity depending on the application. The optimal checklist can be obtained by varying the threshold ($\tau$) used to binarize the learnt concepts ($\mathbf{c}_i[j] = \mathbb{I}[\mathbf{p}_i[j]>\tau]$) to optimize the desired metric (Accuracy, F1-Score, AUC-ROC) on the validation data. In Tables \ref{PhysioNet_checklist_optimization} and \ref{MIMIC_checklist_optimization}, we compare the performance of the checklist obtained by varying the optimization metric. 

In the case of the MIMIC mortality prediction task, we also train our models using Binary Cross Entropy (BCE) loss and optimize over the threshold used to binarize the prediction probabilities. Next, we perform a pairwise comparison between the binary cross-entropy loss and \method \ loss for each optimization metric. From the results in Table \ref{MIMIC_checklist_optimization}, it is evident that our method achieves superior performance in terms of accuracy for both metrics. 

\begin{table}[!ht]
    \centering
    \begin{adjustbox}{width=\linewidth}
    \begin{tabular}{llllllll}
    \toprule
    \textbf{Loss} & \textbf{Checklist Optimization} & \textbf{Accuracy} & \textbf{Precision} & \textbf{Recall} & \textbf{Specificity} & \textbf{T} & \textbf{M}  \\ 
    \midrule
        BCE & \multirow{2}*{Accuracy} & 76.4±0.6 & \textbf{0.61±0.03} & 0.22±0.09 & \textbf{0.95±0.02} & - & -  \\ 
        Checklist &  & \textbf{76.61±0.52} & 0.59±0.05 & \textbf{0.24±0.06} & 0.94±0.02 & 6.6±1.74 & 20  \\ 
        \midrule
        BCE & \multirow{2}*{F1-Score} & 67.7±3.04 & 0.41±0.03 & \textbf{0.62±0.1} & 0.7±0.07 & - & -  \\ 
        Checklist &  & \textbf{71.45±1.66} & \textbf{0.45±0.02} & 0.58±0.04 & \textbf{0.76±0.04} & 7.4±1.36 & 20  \\ 
        \midrule
        Checklist & AUC-ROC & 34.49±4.18 & 0.27±0.01 & 0.98±0.01 & 0.13±0.06 & 4±0 & 20 \\ 
    \bottomrule
    \end{tabular}
    \end{adjustbox}
    \caption{Results for different Checklist Optimization methods (Accuracy, F1-Score, AUC-ROC) on the MIMIC Mortality Prediction Task for $d'_k = 2$. We report results for the same architecture trained using Binary Cross Entropy (BCE) loss (instead of the proposed ProbChecklist loss) for a fair comparison.}
    \label{MIMIC_checklist_optimization}
\end{table}

For the PhysioNet Sepsis Prediction Task, we analyze a different aspect of our method. We study the difference in performance by changing the number of learnt concepts per modality ($d'_k$) for all the metrics. 

\begin{table}[!ht]
    \centering
    \begin{adjustbox}{width=\linewidth}
    \begin{tabular}{llllllll}
    \toprule
    $\mathbf{d'_k}$ & \textbf{Checklist Optimization} & \textbf{Accuracy} & \textbf{Precision} & \textbf{Recall} & \textbf{Specificity} & \textbf{T} & \textbf{M}  \\
    \midrule
        \multirow{3}*{1} & Accuracy & 62.5±1.5 & 0.67±0.18 & 0.21±0.13 & 0.9±0.1 & 4±0.89 & \multirow{3}*{10}  \\ 
         & F1-Score & 49.22±9.74 & 0.44±0.06 & 0.85±0.11 & 0.27±0.22 & 3.8±0.98 &   \\ 
         & AUC-ROC & 40.72±3.72 & 0.39±0.01 & 0.93±0.12 & 0.08±0.13 & 4±1.22 &   \\ 
         \midrule
        \multirow{3}*{2} & Accuracy & 62.97±3.36 & 0.58±0.06 & 0.34±0.18 & 0.82±0.15 & 3±1.1 & \multirow{3}*{20}  \\ 
         & F1-Score & 38.98±1.9 & 0.39±0.02 & 0.98±0.02 & 0.01±0.01 & 2.6±0.8 &   \\ 
         & AUC-ROC & 48.34±10.56 & 0.29±0.17 & 0.54±0.41 & 0.43±0.43 & 3.75±1.09 &  \\
    \bottomrule
    \end{tabular}
    \end{adjustbox}
    \caption{Results for different Checklist Optimization methods (Accuracy, F1-Score, AUC-ROC) on the PhysioNet Sepsis Prediction Task using timeseries. We report results for different values of $d'_k$.}
    \label{PhysioNet_checklist_optimization}
\end{table}

\subsection{Impact of the binarization scheme}
In Tables \ref{tab:physionet_dk_model_checklist} and \ref{tab:MNIST_dk_model_checklist} we show the difference in performance between two types of evaluation schemes. The first scheme is the typical checklist, obtained by binarizing the concept probabilities and thresholding the total number of positive concepts at $\mathbf{T}$. The second is the direct model evaluation, using the non-binarized output probability. It involves only thresholding $\mathbf{\mathbb{P}(d>T)}$ to obtain predictions. As expected, the performance of the checklist is slightly lower than the direct model evaluation because concept weights are binarized leading to a more coarse-grained evaluation.

\begin{table}[ht]
\centering
\begin{adjustbox}{width=\linewidth}
\begin{tabular}{clcccccc}
\toprule
\textbf{$\mathbf{d'_k}$} & \textbf{Evaluation} & \multicolumn{1}{c}{\textbf{Accuracy}} & \multicolumn{1}{c}{\textbf{Precision}} & \multicolumn{1}{c}{\textbf{Recall}} & \multicolumn{1}{c}{\textbf{Specificity}} & \multicolumn{1}{c}{\textbf{T}} & \multicolumn{1}{c}{\textbf{M}} \\ \midrule

\multirow{2}*{1} & Model & 64.105 ± 1.69 & 0.586 ± 0.047 & 0.309 ± 0.025 & 0.857 ± 0.018 & \multirow{2}*{\begin{tabular}[c]{@{}r@{}}3 ± 0.89\end{tabular}} & \multirow{2}*{10} \\ 
& Checklist & 63.716 ± 3.02 & 0.613 ± 0.12 & 0.233 ± 0.035 & 0.9 ± 0.036 &  &  \\ \midrule
\multirow{2}*{2} & Model & 62.656 ± 2.377 & 0.525 ± 0.025 & 0.565 ± 0.032 & 0.667 ± 0.027 & \multirow{2}*{\begin{tabular}[c]{@{}r@{}}3 ± 1.095\end{tabular}} & \multirow{2}*{20} \\ 
& Checklist  & 62.969 ± 3.355 & 0.582 ± 0.061 & 0.343 ± 0.176 & 0.817 ± 0.145 & &  \\ \midrule
\multirow{2}*{3} & Model & 60.781 ± 2.09 & 0.503 ± 0.019 & 0.545 ± 0.026 & 0.648 ± 0.019 & \multirow{2}*{\begin{tabular}[c]{@{}r@{}}4.4 ± 1.356\end{tabular}} & \multirow{2}*{30} \\ 
& Checklist  & 63.671 ± 1.832 & 0.609 ± 0.115 & 0.354 ± 0.157 & 0.823 ± 0.108 & &  \\ \bottomrule
\end{tabular}
\end{adjustbox}
\caption{Evaluation of the binarization scheme for sepsis prediction task using time series data.} 
\label{tab:physionet_dk_model_checklist}
\end{table}

\section{Concepts Interpretation}
\subsection{TANGOS Regularization}
\label{app:tangos}
This section provides a detailed analysis of the learnt concepts using TANGOS regularization outlined in Section \ref{sec:tangos_method}. 

TANGOS regularization assists in quantifying the contribution of each input dimension to a particular concept. We examine the gradient of each concept obtained from the concept extractors with respect to the input signal. The TANGOS loss consists of two components: The first component enforces sparsity, emphasizing a concentrated subset of the input vector for each concept. The second component promotes uniqueness, minimizing the overlap between the input subsets from which each concept is derived. Sparsity is achieved by taking the L1-norm of the concept gradient attributions with respect to the input vector. To promote decorrelation of signal learned in each concept, the loss is augmented by incorporating the inner product of the gradient attributions for all pairs of concepts. 

The degree of interpretability of the concepts can be varied by changing the regularization weights in the TANGOS loss equation \ref{eq:tangos_loss}. For a stronger regularization scheme (i.e. higher regularization weights), the gradient attributions are disentangled and sparse.

\begin{align}
    \label{eq:tangos_loss}
    \mathcal{L}_{TANGOS} &=  \lambda_{sparsity}\mathcal{L}_{sparsity} + \lambda_{correlation}\mathcal{L}_{correlation} \\
    &= \frac{\lambda_{sparsity}}{N} \sum_{n=1}^{N} \frac{1}{d'_k} \sum_{j=1}^{d'_k} ||\frac{\partial p_j(x)}{\partial x_i}||_1 \\
    &\quad + \frac{\lambda_{correlation}}{N}\sum_{n=1}^{N} \frac{1}{{}^{d'_k}C_{2}} \sum_{j=2}^{d'_k} \sum_{l=1}^{d'_k-1} \frac{\langle a^j(x_i), a^l(x_i)\rangle}{||a^j(x_i)||_2, ||a^l(x_i)||_2} \\
    \text{where,} \ \ a^j(x_i) &= \frac{\partial p_j(x)}{\partial x_i} \ \  \text{represents the attribution of $j^{th}$ concept with respect to the $i^{th}$ patient ($x_i$).} \nonumber
\end{align}
To supplement convergence and enhance the stability of our models, we resort to annealing techniques to gradually increase the weights of these regularization terms.

\subsection{Discussion on the Interpretability of learnt concepts}
\label{app:discussion_interpretability}
Identifying patterns from the binarized concepts is largely based on visual inspection. To aid our analysis, we use gradient attribution of the concepts with respect to the input to identify parts that contribute to each concept. 

We discuss the interpretability of the concepts for each modality separately:
\begin{itemize}
    
\item \textbf{Continuous Tabular Dataset (PhysioNet Sepsis Tabular):} Our method works effectively on continuous tabular data where we know what each attribute represents. ProbChecklist learns the thresholds to binarize these continuous features to give the concepts. These concepts are inherently interpretable. All existing methods are designed to operate on continuous or categorical tabular datasets only. As such, our method is already novel. Nevertheless, we investigated the ability of our architecture to handle more complex data modalities.
\item \textbf{Image and Time Series Tasks (MNIST/MIMIC):} Our initial results highlighted that the gradient attributions for the different concepts learned from one modality were very similar (plots in Section \ref{app:mnist_tangos_concepts}).  Pixel intensities alone are insufficient for automatically interpreting the concepts, giving rise to the need for visual inspection by domain experts. Therefore, we opted to visualize the gradient attributions of the concepts concerning the input. These plots aid domain experts in extracting patterns and recognizing the learned concepts.TANGOS enforces sparsity and decorrelation among concepts, thereby specializing them to specific input regions and preventing redundancy. While this represents one notion of interpretability, different applications may benefit from alternate definitions suited to their needs. One significant benefit of our approach is its adaptability to incorporate various other interpretability methods, enhancing its flexibility.
\item \textbf{NLP Tasks (Medical Abstract Classification):} Compared to images and time series, interpreting concepts learned from textual data is easier because its building blocks are tokens which are already human understandable. In this setup, instead of employing TANGOS regularization, we identified words associated with positive and negative concepts (positive and negative tokens). Each concept is defined by the presence of positive words and the absence of negative words. The resulting checklist is visualized in Figure \ref{fig:neoplasm_checklist}. We have edited the main paper to include more details about this. 

\end{itemize}

\subsection{MNIST Images}
\label{app:mnist_tangos_concepts}
We consider the experimental setup where each encoder learns two concepts per image. Figure \ref{fig:app_mnist_tangos} shows the gradient attribution heatmaps of Image 2 of the MNIST Dataset with respect to both concepts for four cases. The ground truth concept for this image is $\textbf{Image 2} \in \{1,3,5,7,9\}$. 

We observe that there is a significant overlap among the gradient attributions when no regularisation terms were used ($\lambda_{sparsity} = 0$ and $\lambda_{correlation} = 0$), indicating redundancy in the learnt concepts (almost identical representations). However, it manages to identify odd digits correctly and performs well. 

For $\lambda_{sparsity} = 10$ and $\lambda_{correlation} = 1$, the concepts correspond to simple features like curvatures and straight lines and are easy to identify. We infer from the gradient heat maps that concepts 1 and 2 focus on the image's upper half and centre region, respectively. Concept 1 is true for digits 5, 8, 9 and 7, indicating that it corresponds to a horizontal line or slight curvature in the upper half. Since Digits 0 and 2 have deeper curvature than the other images, and there is no activity in that region in the case of 4, concept 1 is false for them. concept 2 is true for images with a vertical line, including digits 9, 4, 5 and 7. Therefore, concept 2 is false for the remaining digits (0, 2, 8). 

Table \ref{app:tab_mnist_tangos} highlights the tradeoff between interpretability of the concepts and the checklist performance. By increasing regularization weights, even though checklist accuracy decreases, the gradient attributions disentangle and become sparse. Based on our experiments, the checklist performance is more sensitive to the sparsity weight (i.e. decreases more sharply).  

We extend this analysis to the setting with four learnable concepts $d'_k = 4$. Figure \ref{fig:app_mnist_tangos_dk_4} ($\lambda_{sparsity} = 10$ and $\lambda_{correlation} = 1$) shows that each concept is focusing on different regions of the image. At least three concepts out of four are true for odd digits, whereas not more than one sample is true for negative samples. This attests to the ability of the method to learn underlying concepts correctly. 

\begin{table}[!ht]
    \centering
    \begin{tabular}{cllcccc}
    \toprule
        $\mathbf{d'_k}$ & $\mathbf{\lambda_{sparsity}}$ & $\mathbf{\lambda_{correlation}}$ & \textbf{Accuracy} & \textbf{Precision} & \textbf{Recall} & \textbf{Specificity}  \\ \midrule
        \multirow{4}*{2} & 0 & 0 & 95.48 & 0.829 & 0.981 & 0.948\\ 
        & 1 & 0.01 & 93.40 & 0.761 & 0.988 & 0.920\\ 
        & 0.01 & 1 & 92.88 & 0.745 & 0.990 & 0.913  \\ 
        & 10 & 1 & 79.28 & 0.485 & 0.978 & 0.733 \\ \midrule
        \multirow{4}*{4} & 0 & 0 & 97.20 & 0.933 & 0.930 & 0.983 \\ 
        & 1 & 0.01 & 97.04 & 0.901 & 0.961 & 0.973\\ 
        & 0.01 & 1 & 96.96 & 0.904 & 0.953 & 0.974 \\ 
        & 10 & 	1 &	78.88 &	0.491 &	0.844 &	0.775\\ \bottomrule
    \end{tabular}
    \caption{Performance of  \method \ for different combinations of sparsity ($\lambda_{sparsity}$) and correlation ($\lambda_{correlation}$) regularization weights on MNIST Checklist Dataset}
    \label{app:tab_mnist_tangos}
\end{table}
\begin{figure}
    \centering
    \includegraphics[width=\linewidth]{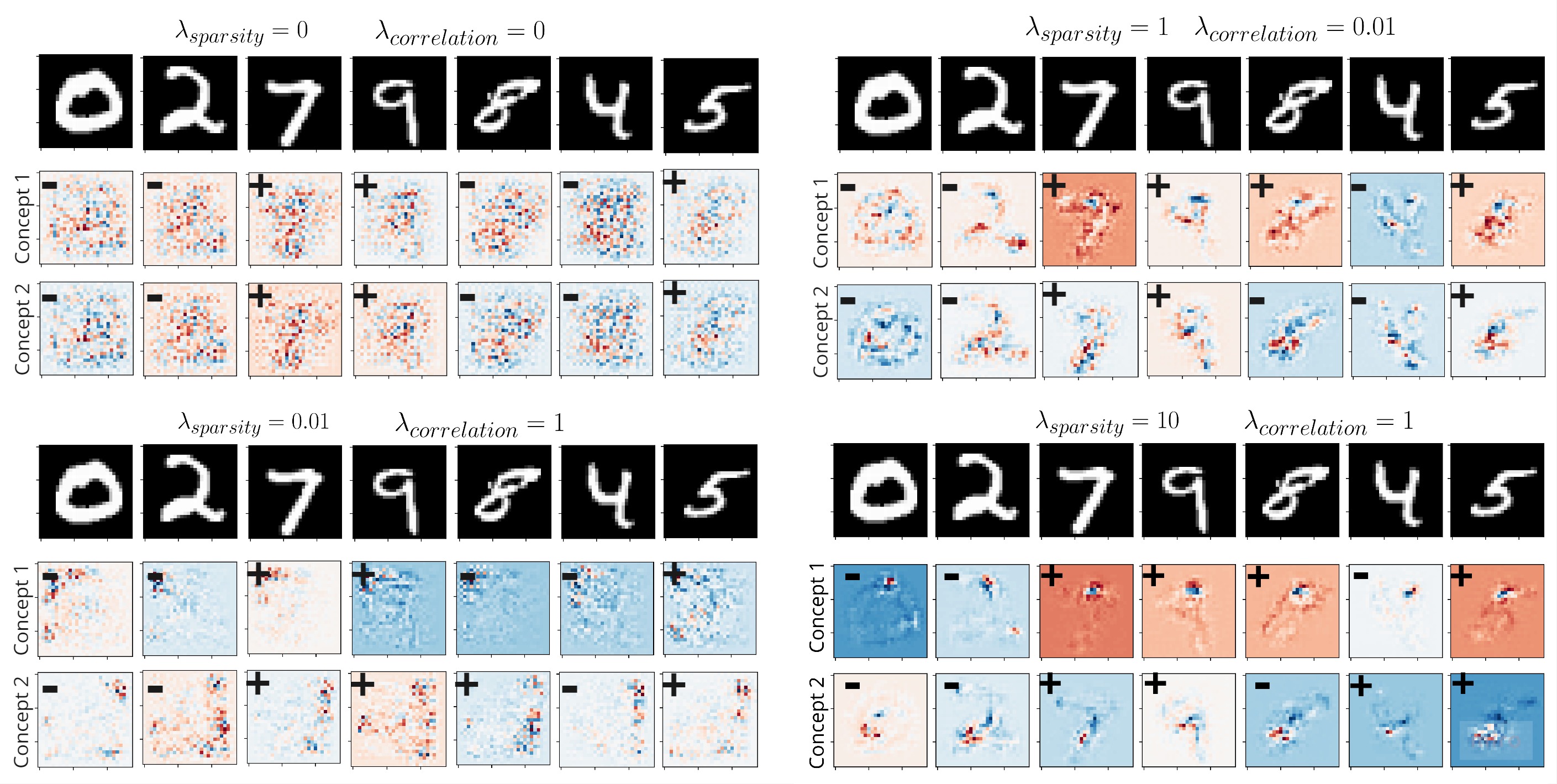}
    \caption{We plot images and corresponding gradient attributions heat maps for seven input samples of the Image 2 modality of the MNIST dataset for different combinations of sparsity and correlation regularization terms. We used a checklist with two learnable concepts per image. The intensity of red denotes the positive contribution of each pixel, whereas blue indicates the negative. If a concept is predicted as true for an image, then we represent that with a plus (+) sign and a negative (-) otherwise.}
    \label{fig:app_mnist_tangos}
\end{figure}

\begin{figure}
    \centering
    \includegraphics[width=\linewidth]{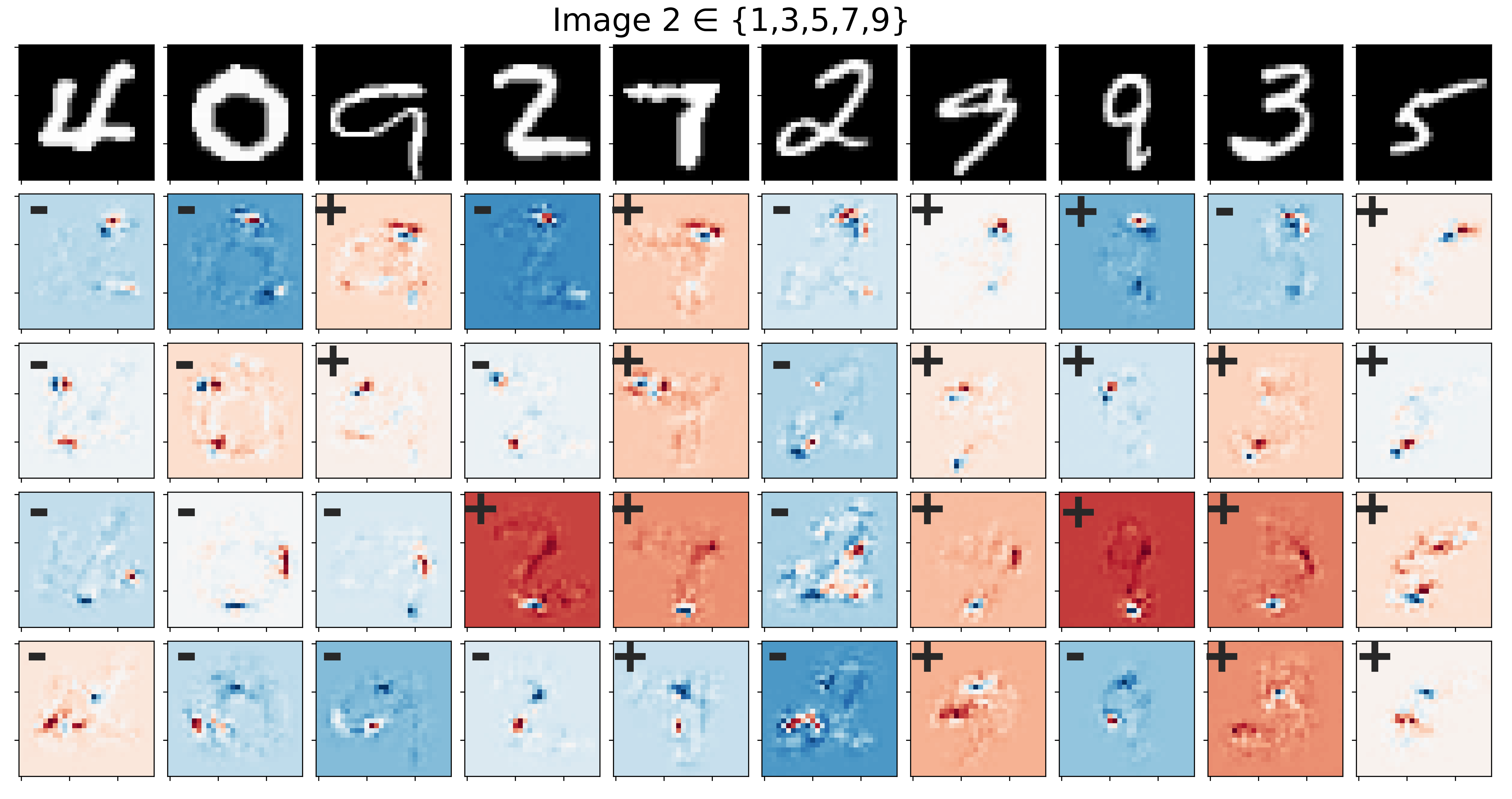}
    \caption{We plot images and corresponding gradient attributions heat maps for seven input samples of the Image 2 modality of the MNIST dataset for $\lambda_{sparsity} = 10$ and $\lambda_{correlation} = 1$. We used a checklist with four learnable concepts per image. The intensity of red denotes the positive contribution of each pixel, whereas blue indicates the negative. If a concept is predicted as true for an image, then we represent that with a plus (+) sign and a negative (-) otherwise.}
    \label{fig:app_mnist_tangos_dk_4}
\end{figure}

\subsection{MIMIC Time Series}

For this analysis, we again fix our training setting to $d'_k = 2$. Sudden changes in slope in the clinical features were associated with switching of the sign gradient attributes.  In Figure \ref{fig:mimic_tangos}, we plot the time series for heart rate and corresponding gradient attributions for the first concept learnt by CNN. Maximum activity is observed between time steps 12 to 17 for all the patients. If a positive peak (or global maxima)  is observed in this period (patients 3, 6, 7, 8), then the gradient at all other time steps has a positive contribution. Another interesting observation is that if the nature of the curve is decreasing and the significant portion has negative values (Images 2, 5, 6), then the surrounding gradient attributions have a negative impact. These features are consistent irrespective of the patient outcome. 

\begin{figure}
    \centering
    \includegraphics[width=\linewidth]{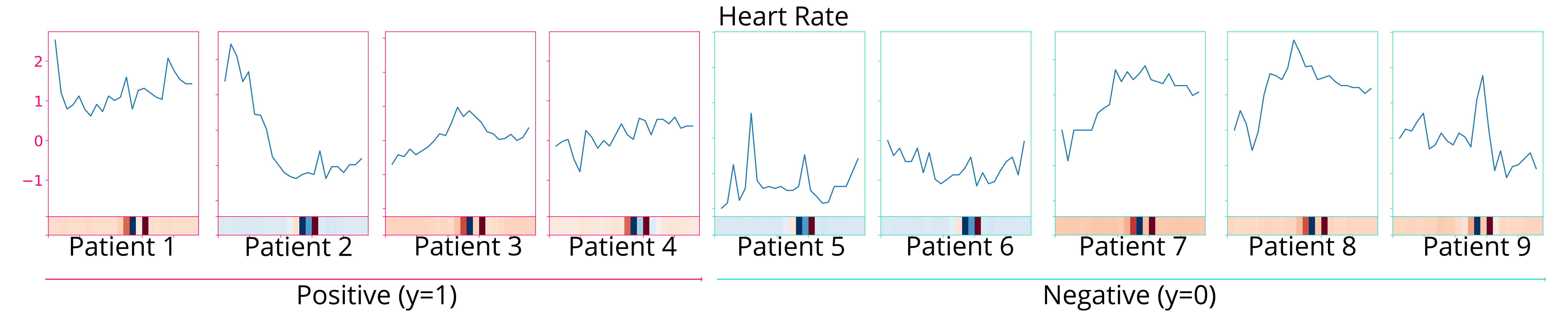}
    \caption{We plot time series for Heart Rate and corresponding gradient attributions for nine patients at $\lambda_{sparsity} = 0.1$ and $\lambda_{correlation} = 10$. The intensity of red denotes the positive contribution of that time step, whereas blue indicates the negative. The border colour of each sample encodes the ground-truth label, with pink being the positive outcome (i.e. mortality).}
    \label{fig:mimic_tangos}

\end{figure}
    
From Figure \ref{fig:app_mimic_tangos_dsp}, we infer that gradient attributions for both the concepts are identical. In the absence of regularization terms, it was very hard to explain the concepts that were being learnt. Even though interpretability comes at the cost of performance, it helped in mapping learnt concepts to archetypal signals and patterns in the timeseries. Sudden changes in slope in the clinical features were associated with switching of the sign gradient attributes. 

With the help of Figure \ref{fig:app_mimic_tangos_hr_wbc}, we infer the concepts being learnt when the regularization weights are $\lambda_{sparsity} = 0.1$ and $\lambda_{correlation} = 10$. The region between time steps 11 to 16 is activated for Heart Rate. The learnt concept is negative when there is a sudden increase (peak) in the values around this region as observed in Patients $4,5,6,7$. This concept is true when there is slight fluctuation, or the values are steady around the high gradient region, as seen in the remaining patients. 
For White Blood Cell Count, our model learns two distinct concepts. The neuron gradient attributions for the first concept are high for time steps 3 to 8. This concept is positive when a local maxima is observed (an increase from the starting value and then a decrease). This pattern is visible in Patients $2, 5$. In all the other cases, a local minima is observed first. 
\label{app:mimic_tangos_concepts}
\begin{figure}
    \centering
    \includegraphics[width=\linewidth]{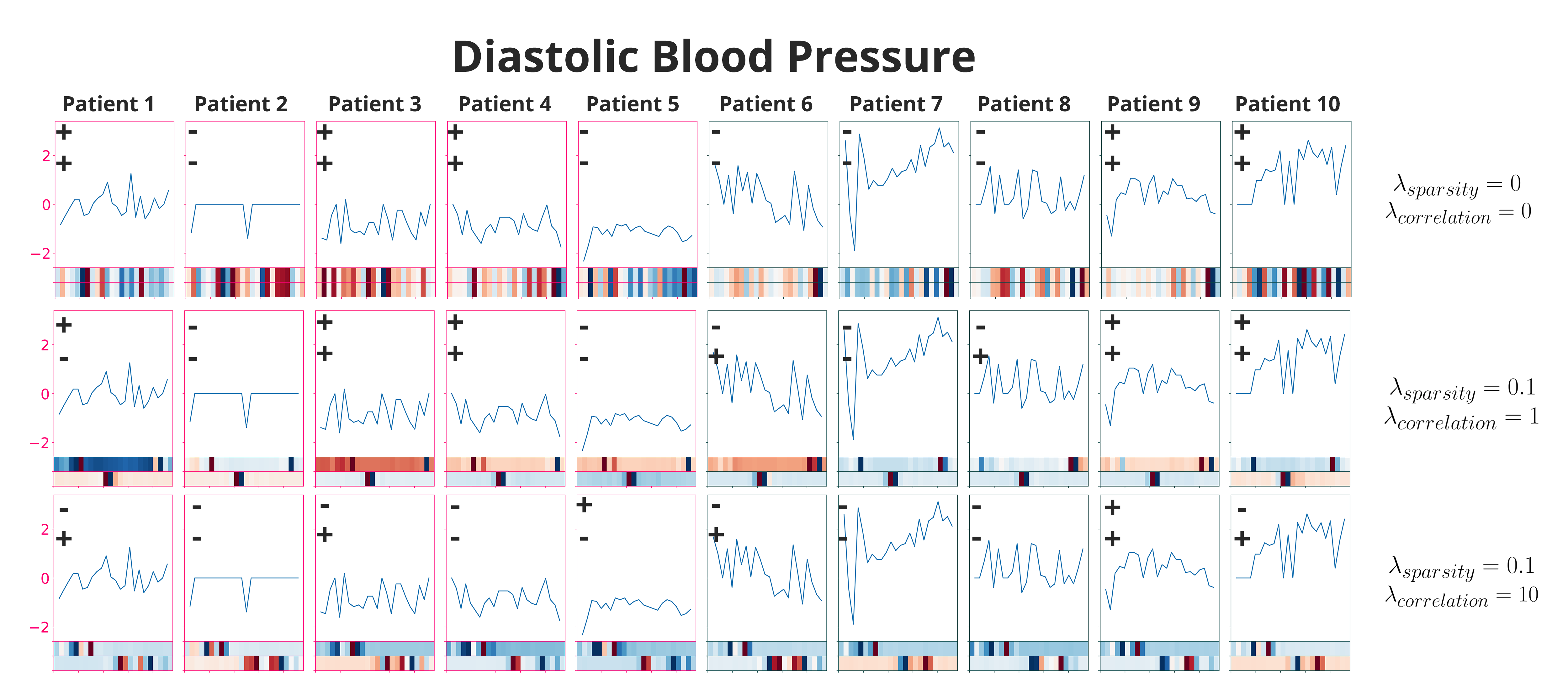}
    \caption{We plot time series for Diastolic Blood Pressure and corresponding gradient attributions for for both concepts for different combinations of regularization weights. The intensity of red denotes the positive contribution of that time step, whereas blue indicates the negative. The border colour of each sample encodes the ground-truth label, with pink being the positive outcome (i.e. mortality). If a concept is predicted as true for a patient, then we represent that with a plus (+) sign and a negative (-) otherwise.}
    \label{fig:app_mimic_tangos_dsp}
\end{figure}

\begin{figure}
    \centering
    \includegraphics[width=\linewidth]{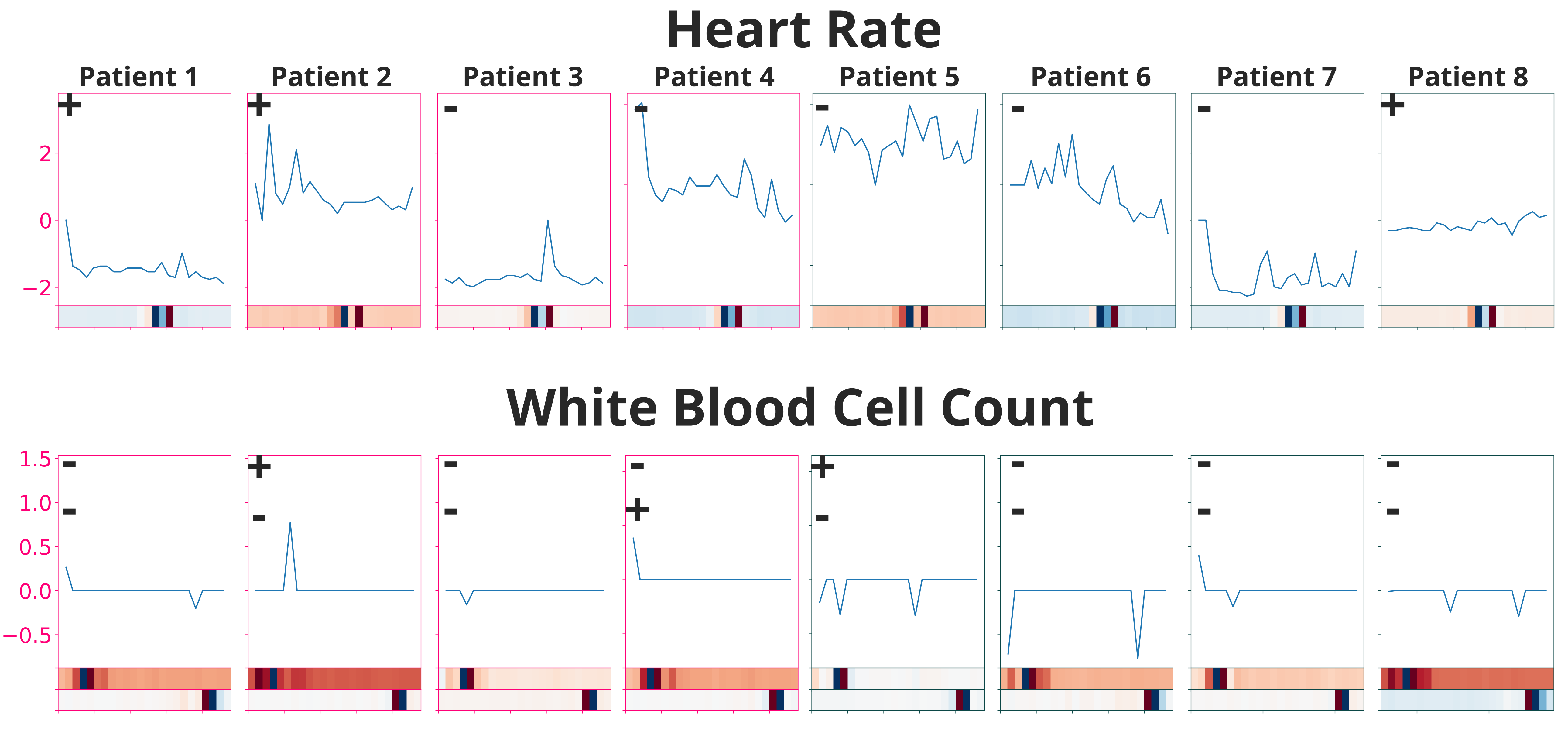}
    \caption{We plot time series for Heart Rate and White Blood Cell Count and corresponding gradient attributions for eight patients for $\lambda_{sparsity} = 0.1$ and $\lambda_{correlation} = 10$. The intensity of red denotes the positive contribution of that time step, whereas blue indicates the negative. The border colour of each sample encodes the ground-truth label, with pink being the positive outcome (i.e. mortality). If a concept is predicted as true for a patient, then we represent that with a plus (+) sign and a negative (-) otherwise.}
    \label{fig:app_mimic_tangos_hr_wbc}
\end{figure}

\subsection{Medical Abstracts Corpus}
Since we only work with text data, we assumed that the concepts learnt are functions of tokens in the text. To obtain interpretable summaries of the concept, we train decision trees on token occurrence and use the concept predictions as labels. We then represent each concept by the list of tokens used in the five first layers of each decision tree. We prune this representation by only keeping the unique tokens for each concept.

\section{Additional Fairness Results}

Additionally, we provide FNR and FPR values for each subgroup both before and after applying regularization (Table \ref{tab:fpr_fnr_fairness}). This is done to ensure that the error rates of majority subgroups, where the performance was originally strong, do not increase. Notably, most minority subgroups benefit from this regularization; however, FNR increases for both the Female (minority) and Male (majority) subgroups after regularization.

\begin{table}[ht]
    \centering
    \begin{adjustbox}{width=0.6\linewidth}
    \begin{tabular}{lccc}
    \toprule
    \textbf{Subgroup} &\textbf{Before/After FR} & \textbf{FPR} &  \textbf{FNR} \\
    \midrule
    \multirow{2}*{\textbf{Female}} & Before  & 0.719  & 0.0989 \\
    & After & 0.1899 & 0.6975  \\ \midrule
    \multirow{2}*{\textbf{Male}} & Before & 0.4969 &  0.2931 \\
    & After & 0.127 & 0.851 \\
    \midrule
    \multirow{2}*{\textbf{White}} & Before & 0.782 & 0.937 \\  
    & After & 0.072 & 0.0868 \\
    \midrule
\multirow{2}*{\textbf{Black}} & Before & 0.661 & 0.824\\ 
& After & 0.0563 & 0.0597 \\
\midrule
\multirow{2}*{\textbf{Others}} & Before& 0.724 & 0.913 \\
& After & 0.064 & 0.0805 \\
\bottomrule
\end{tabular}
\end{adjustbox}
\caption{We provide the actual values of FNR and FPR for each subgroup before and after fairness regularization is applied. } 
\label{tab:fpr_fnr_fairness}
\end{table}

\begin{table}[ht]
    \centering
    \begin{tabular}{c|ccc}
        \toprule
        & \textbf{Accuracy} & \textbf{Recall} & \textbf{Precision} \\ 
        \midrule
        Without Fairness Regularizer & 75.99 & 0.117 & 0.648 \\
        With Fairness Regularizer & 72.412 & 0.256 & 0.553 \\
        \bottomrule
    \end{tabular}
    \caption{We present accuracy, precision, and recall metrics to compare the model's performance before and after applying fairness regularization. Despite observing a decrease in accuracy, the increase in recall signifies a positive development.}
    \label{tab:my_label2}
\end{table}

\section{Extension of ProbChecklist: Checklists with learnable integer weights}
\label{app:integer_weight_probchecklist}
Checklist structure considers the weight of each item on the checklist as +1. In this section, we propose an extension to allow for integer weights larger than 1 because it has many useful applications and may be of interest to users. Before we formulate a method to learn integer weights for each concept, it is important to note that this would make it harder to interpret the checklist. More specifically, the meaning of the checklist threshold used for classification of samples would now change. Previously, it represented the minimum number of true concepts for a positive classification. Now it signifies a score - the minimum value of the weighted sum of concept probabilities for a positive classification.

Given K data modalities as the input for sample i, we train K concept learners to obtain the vector of probabilistic concepts  of each modality $\mathbf{p_i^k} \in [0,1]^{d'_k}$. 
Next, we concatenate the full concepts probabilities ($\mathbf{p_i}$) for sample i. At this point, we can introduce a trainable weight vector $W \in [0,1]^{d'}$ (with $\sum_{i=1}^{d’}w_i = 1$) of the same dimension as $p_i$ (concept probabilities) which will capture the relative importance of the features. 
Element-wise product ($\odot$) of $p_i$ and W represents the weighted concept probabilities and can be denoted with $Wp_i$. 
This vector can be normalized by dividing each element with the maximum entry (L0 norm). While training it i For training the concept learners, we pass $Wp_i$ through the probabilistic logic module. 
After training, the integer weights corresponding to each concept in the checklist can be obtained by converting the W to a percentage: $W_{int}$. At inference time, we discretize $\mathbf{C_i}$ to construct a complete predictive checklist. Next, compute the score $W_{int}^T C_i$ and compare it against the checklist threshold, $M$, to classify the sample.

\begin{figure}[ht]
    \centering
    \includegraphics[width=\linewidth]{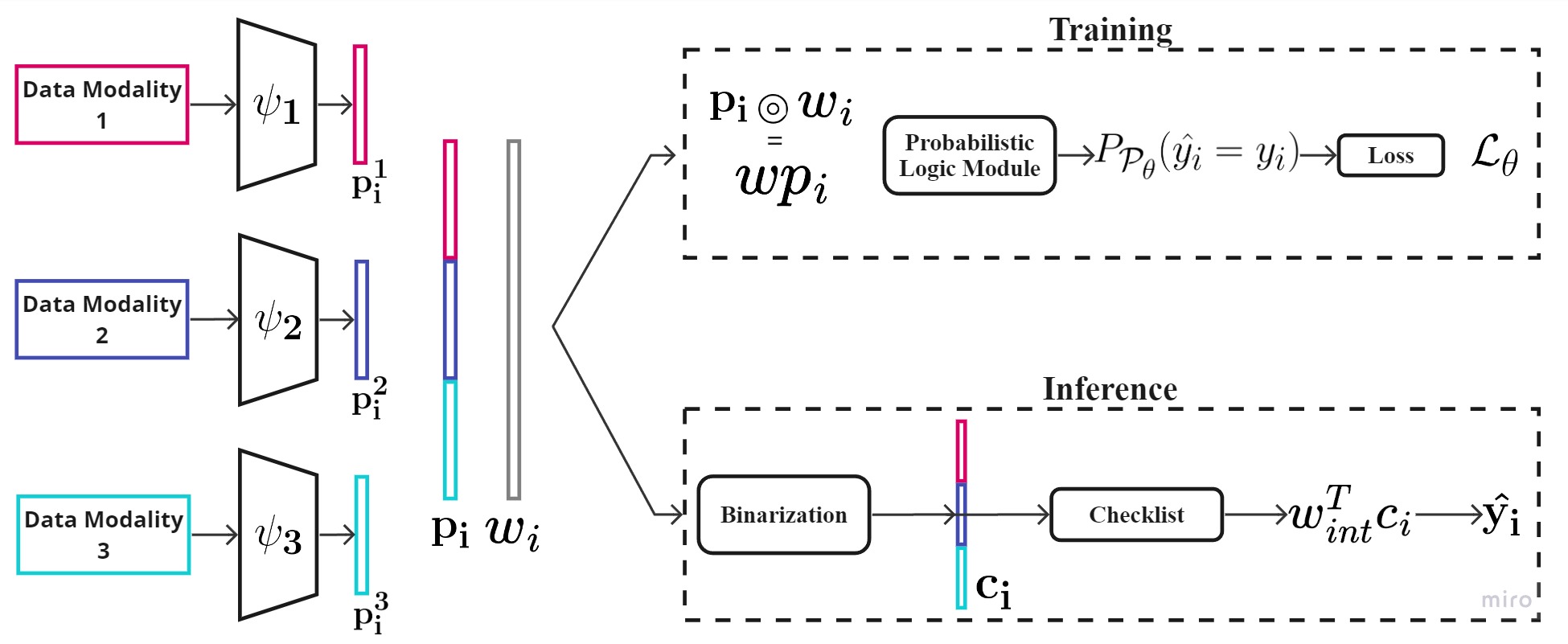}
    \label{fig:extension_probchecklist}
    \caption{Checklists with learnable integer weights}
\end{figure}

\section{Plot: Performance Results of ProbChecklist}
\label{app:table1_plot}
\begin{figure}[ht]
    \centering
    \includegraphics[width=\linewidth]{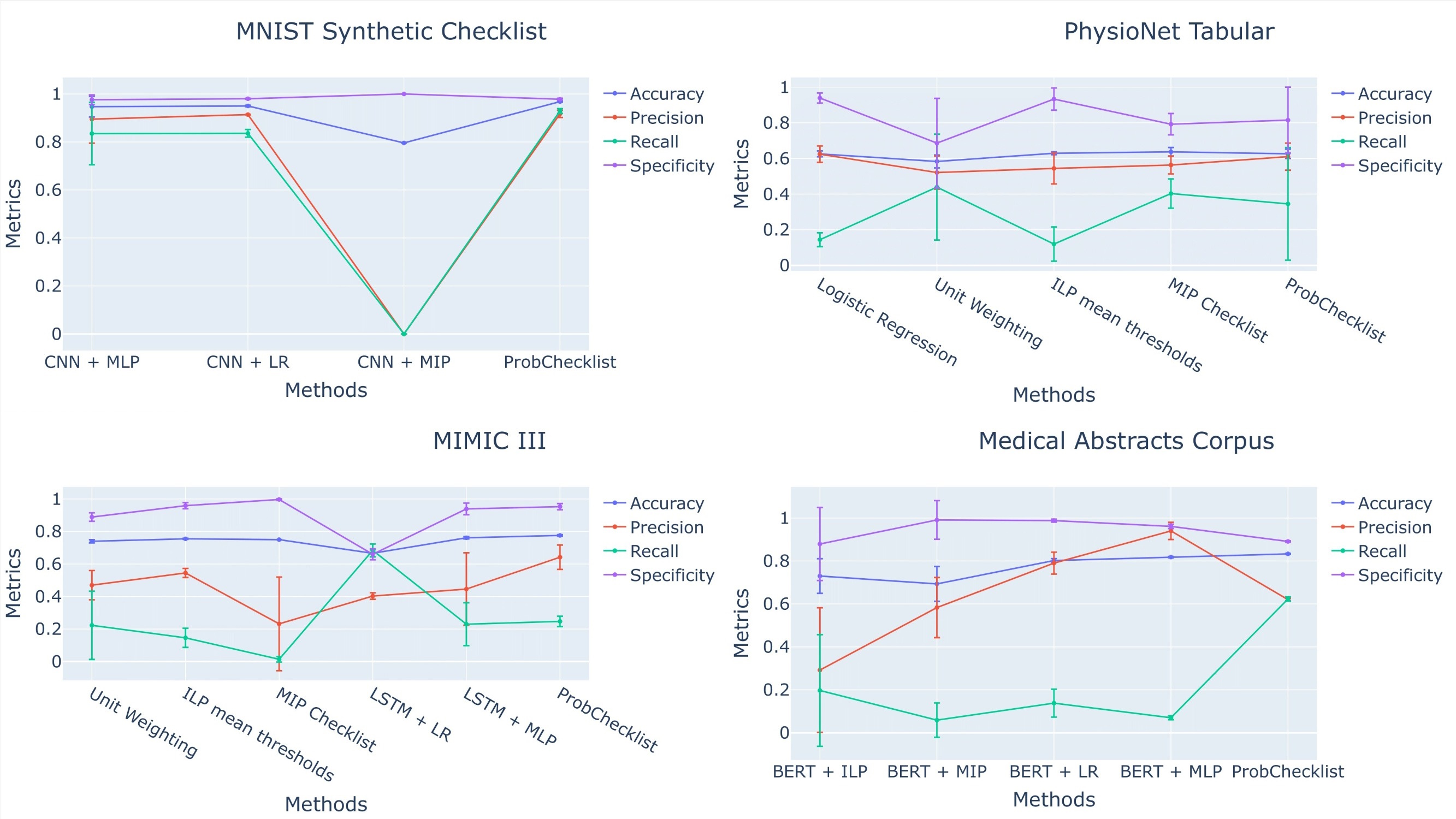}
    \caption{We plot the performance results reported in Table \ref{tab:all_checklist_pred}.}
    \label{fig:probchecklist_results_table1}
\end{figure}

\section{Implementation}
The code for all the experiments and instructions to reproduce the results are available at \url{https://anonymous.4open.science/r/ProbChecklist-322A/}. We have extensively used the PyTorch\citep{NEURIPS2019_9015_Pytorch} library in our implementations and would like to thank the authors and developers.

\section{Checklists of decision trees}
\label{app:trees}

\subsection{Trees as logical decision rules}

We denote the depth of the tree with L $(>0)$ and the layers of the tree with $l \in {1, \dots, L}$. We assume a balanced binary tree structure of depth L containing $2^{l-1}$ nodes in layer $l$. The last layer contains leaf nodes that hold the model predictions. The remaining nodes represent M binary partitioning rules, where $M = \sum_{l = 1}^{L-1} 2^{l-1} = 2^{L-1} - 1$. We index nodes with $(j,l)$ where $j \in {1, \dots, 2^{l-1}}$ represents the index of the node in layer $l \in {1, \dots, L-1}$. Each node takes an input vector $\mathbf{x}_i$ and outputs a boolean $c_i^{(j,l)} = \phi^{(j,l)}(\mathbf{x}_i)$. We illustrate the structure of the tree in Figure \ref{fig:decision_tree}.

Each path of the decision tree results in a logical rule whose outcome is embedded in the value of it's leaf node. Hence, we can filter out the path which lead to a positive outcome and construct the following logical rule F, for L = 4:

\begin{equation}
    F (\hat{y_i} = 1) := (\neg c_i^{(1,1)} \land \neg c_i^{(2,1)} \land c_i^{(3,2)}) 
    \lor (\neg c_i^{(1,1)} \land c_i^{(2,1)} \land c_i^{(3,4)}) 
    \lor (c_i^{(1,1)} \land \neg c_i^{(2,2)} \land c_i^{(3,6)}) 
    \lor (c_i^{(1,1)} \land c_i^{(2,2)} \land c_i^{(3,8)}).
\end{equation}

When relaxing $c_i^{(j,l)}$ to be a probability between $0$ and $1$, we can compute the probability of positive label as

\begin{equation}
    P_{\mathcal{P}_\theta} = P((\neg c_i^{(1,1)}) (P(\neg c_i^{(2,1)}) P(c_i^{(3,2)}) + P(c_i^{(2,1)})P(c_i^{(3,4)})) + P((c_i^{(1,1)}) (P(\neg c_i^{(2,2)}) P(c_i^{(3,6)}) + P(c_i^{(2,2)})P(c_i^{(3,8)})).
\end{equation}

The node partition rules in the tree can then be learned using soft concept extractors as in Section~\ref{sec:methods} with the loss: $\mathcal{L} = y_i \log(P_{\mathcal{P}_{\theta}}(\hat{y}_i = 1)) + (1-y_i)  \log(P_{\mathcal{P}_{\theta}}(\hat{y}_i = 0))$. In our experiments, we used $M$ soft concept extractors of the form $\Psi(\mathbf{x}) = \sigma(\beta^T \mathbf{x})$, with $\beta$ a learnable vector with same dimension than $\mathbf{x}$, and $\sigma(\cdot)$ the sigmoid function. We note that although we used simple concept extractors in our experiments, they can be chosen to be arbitrarily complex. For instance, one could use a convolutional neural network on top of images or transformers on top of text, which would produce complex decision rules. This is in contrast with classical decision tress that can only apply to the original representation of the data (\emph{e.g.,} a classical decision tree on an image would result in pixel-wise rules, which are likely ineffective). 

\subsection{Learning balanced trees}

The simplest training strategy for training decision trees described above does not enforce any regularization on the produced tree, aside from the number of layers. Yet, certain tree characteristics are desirable when it comes to interpretability and diversity. In particular, balanced trees are favored for their ability to provide a faithful representation of data, fostering diversity in the learned concepts. This characteristic enhances interpretability of the concepts. 

\textbf{\textcolor{black}{Why do balanced trees enhance interpretability of the concepts?}}

\textcolor{black}{
When we initially trained simple decision trees without enforcing the balanced tree constraint, we observed that some of the learned concepts were trivial (either entirely true or false). These "garbage" features led to imbalanced trees, compromising their interpretability and utility. To address this, we introduce entropy-based regularization techniques aimed at learning balanced decision trees. The intuition behind them is that each node will learn features that effectively split the samples into two significantly sized groups. This promotes interpretability by ensuring that each node's decision is meaningful rather than trivial.
}

\textcolor{black}{
Without this regularization, the tree could rely on very few concepts (or maybe even a single concept) that perfectly predict the true class, while assigning all remaining nodes are always true or false. Such a tree lacks interpretability. By enforcing balance, we ensure that most nodes contribute to learning meaningful and interpretable concepts.
}

\textbf{\textcolor{black}{How do we learn balanced trees?}}

We propose three regularization terms designed to facilitate the learning of balanced trees. These terms are grounded in the simple intuition that balanced trees contain distinct concepts at each node and these concepts split the samples evenly, thereby maximizing entropy.

Let $n^{(l,j)}$ denote the number of data points that are evaluated at $j^{th}$ node in the layer l with $N = \sum_{j=0}^{2^{l-1}} n^{(l, j)}$. Depending on the number of samples for which concepts $c_i^{(l,j)}$ is true, these $n^{(l,j)}$ points will split into two groups containing $n^{(l+1, j')}$ and $n^{(l+1, j'')}$ samples. The probabilities of the branches emerging from the split at node $c^{(l,j)}$ is the fraction of samples which are classified positive and negative. $Pr[c^{(l,j)} = 1]$ represents the fraction of samples that are classified as positive at split $c^{(l,j)}$. 

\begin{equation}
    Pr[c^{(l,j)} = 0] = \frac{n^{(l+1, j')}}{n^{(l, j)}} ; \ \ Pr[c^{(l,j)} = 1] = \frac{n^{(l+1, j'')}}{n^{(l, j)}}; \ \ \ \ (n^{(l+1, j')} + n^{(l+1, j'')} = n^{(l, j)})
\end{equation}

Using these branch probabilities we can define the entropy of the split as
\begin{equation}
    \textsf{ENT} (l,j) = \sum_{i \in \{0,1\}} -Pr[c^{(l,j)} = i] \ log(Pr[c^{(l,j)} = i])
\end{equation}

The first regularization method aims to minimize the difference in entropy between the left and right subtrees generated at each split. This is achieved by computing the sum of entropy differences across all nodes where splits occur, resulting in balanced subtrees. 
\begin{equation}
    \mathcal{L}_{ENT_{Subtree}} = \sum_{l=1}^{L-1} \lVert \textsf{ENT}(l+1, 2l -1) - \textsf{ENT}(l+1, 2l) \rVert_1
\end{equation}

The second regularization term seeks to maximize the entropy of the splits in the penultimate layer of the tree (l = L-2). We sum over the entropies of all the splits in the ($L-2$)-th layer and add the negative of this loss function.
\begin{equation}
    \mathcal{L}_{ENT_{FinalSplit}} = \sum_{j=1}^{2^{L-1}} - \textsf{ENT}(L-1, j)
\end{equation}
The third regularization ensures the concepts learn from each modality are unique. To promote decorrelation of signals learned in each concept, the loss is augmented by incorporating the inner product of the concept probabilities for all pairs of concepts.

\begin{table}[ht]
\centering
\begin{adjustbox}{width = 0.7\linewidth}
\begin{tabular}{lccc}
\toprule
\textbf{Model}     & \textbf{AUC-ROC} & $\mathbf{\mathcal{L}_{ENT\_Subtree}}$ & $\mathbf{\mathcal{L}_{ENT\_Final\_Split}}$\\
\midrule
$\mathcal{P}_{Tree}$ & 0.880             & 5.502              & -2.755            \\
$+ \mathcal{L}_{ENT\_Subtree}$                   & 0.898            & 4.902              & -2.755            \\
$+ \mathcal{L}_{ENT\_Final\_Split}$               & 0.891            & 3.673              & -3.673            \\
$+ \mathcal{L}_{Correlation}$         & 0.933            & 5.306              & -3.648           \\
\bottomrule
\end{tabular}
\end{adjustbox}
\caption{[Single tree] We report the results of our experiment on learning balanced decision trees using the proposed logical rule and regularization terms (see Appendix \ref{app:trees}). The first entry in the table corresponds to using only the logical rule, while the subsequent entries include the regularization terms. We observe an improvement in the AUC-ROC score as the tree becomes more balanced with the addition of the regularization terms. The decrease in the entropy-based regularization loss indicates the increased balance in the tree structure. Additionally, we present the learned decision trees in Figure \ref{fig:decision_tree} to corroborate these findings.}
\label{app:tab:decision_tree_results}
\end{table}

\textcolor{black}{
\subsection{Comparing Implementation Challenges: Decision Trees vs Checklists} 
In this section, we discuss the additional challenges encountered when implementing decision trees compared to checklists. Both methods, checklists and decision trees, require users to specify the number of concepts to be learned from each modality, often relying on domain knowledge. However, implementing our decision tree method is computationally intensive. In the case of decision trees, an additional difficulty arises as we also need to search over the space of possible tree structures. This introduces extra hyperparameters, such as the depth of the tree and specific node placements, which must be considered to find the optimal model. These parameters grow further when we implement a checklist of decision trees. Due to these factors, we limit our experiments with decision trees to synthetic datasets. 
}
\section{\textcolor{black}{Limitations of ProbChecklist}}
\textcolor{black}{We have taken the first step towards learning checklists from complex modalities, whereas the existing methods are restricted to tabular data. Even though we have a mechanism to learn interpretable checklist classifiers using logical reasoning, more work is needed on the interpretability of the learnt concepts. Another drawback is the exponential memory complexity of the training. A fruitful future direction would be to study approximations to explore a smaller set of combinations of concepts. Detailed complexity analysis can be found in Appendix \ref{app:complexity}.} \textcolor{black}{Incorporating relaxations like k-subset sampling could indeed help scale the method in practice. Drawing inspiration from papers \cite{ahmed2023simple} and \cite{wijk2024revisitingscorefunctionestimators}, relaxed sampling methods can be designed to approximate the top concepts and reduce memory requirements.}

\end{document}